\DeclareMathOperator{\act}{act}
\DeclareMathOperator{\cost}{cost}
\DeclareMathOperator{\OPT}{OPT}
\DeclareMathOperator{\DYN}{DYN}
\DeclareMathOperator{\DYNA}{DYN_{AGKP}}
\DeclareMathOperator{\DYNAp}{DYN_{AGKP}^{\act}}
\DeclareMathOperator{\Geo}{Geo}
\DeclareMathOperator{\Var}{Var}
\newcommand{\E}{\mathbb{E}}
\newcommand{\OddExponent}{\ensuremath{\textsc{OddExponent}}\xspace}
\newcommand{\share}{\textsc{Share}\xspace}
\theoremstyle{plain}
\newtheorem{theorem}{Theorem}[section]
\newtheorem{proposition}[theorem]{Proposition}
\newtheorem{lemma}[theorem]{Lemma}
\theoremstyle{definition}
\newtheorem{definition}[theorem]{Definition}
\newtheorem{observation}[theorem]{Observation}
\theoremstyle{remark}
\begin{document}

\title{Mixing predictions for online metric algorithms}

\author{
Antonios Antoniadis\\
{\small University of Twente}\\
\and
Christian Coester\\
{\small University of Oxford}
\and
Marek Eli\'a\v{s}\\
{\small Bocconi University}
\and
Adam Polak\\
{\small Max Planck Institute for Informatics}
\and
Bertrand Simon\\
{\small IN2P3 Computing Center / CNRS}
}

\date{}

\maketitle

\begin{abstract}
A major technique in learning-augmented online algorithms is combining multiple algorithms or predictors.
Since the performance of each predictor may vary over time, it is desirable to use not the single best predictor as a benchmark, but rather a dynamic combination which follows different predictors at different times.
We design algorithms that combine predictions and are competitive against such dynamic combinations for a wide class of online problems, namely, metrical task systems.
Against the best (in hindsight) unconstrained combination of $\ell$ predictors, we obtain a competitive ratio of $O(\ell^2)$, and show that this is best possible.
However, for a benchmark with slightly constrained number of switches between different predictors, we can get a $(1+\epsilon)$-competitive algorithm.
Moreover, our algorithm can be adapted to access predictors in a bandit-like fashion, querying only one predictor at a time. An unexpected implication of one of our lower bounds is a new structural insight about covering formulations for the $k$-server problem.
\end{abstract}

\section{Introduction}

Motivated by the power of machine-learned predictions, the field of learning-augmented algorithms has been growing rapidly in recent years. In the classical field of online algorithms, an input sequence is revealed to an algorithm over time and it is assumed that at all times, no information about the future part of the input is available. In contrast, a learning-augmented algorithm additionally has access to \emph{predictions} (e.g., machine-learned) related to the future input. These predictions may be inaccurate, so a challenge is to simultaneously utilize high-quality predictions to their best advantage while at the same time avoiding to be misled by erroneous predictions.

An important technique in the field of learning-augmented algorithms is the method of combining multiple algorithms into a single hybrid algorithm that leverages the advantages of all individual algorithms. The basic idea goes back to several decades before the area of learning-augmented algorithms was born and also has applications, for example, in pure online algorithms: \citet{FiatRR90} defined a MIN operator on algorithms for the $k$-server problem that combines several algorithms into one whose cost matches the best of them up to a constant factor, and they used this technique to obtain the first competitive algorithm for the $k$-server problem.

In learning-augmented algorithms, similar combination techniques are employed for several purposes. Firstly, they are frequently used to make algorithms robust against prediction errors by combining an algorithm that mostly follows predictions with a classical online algorithm that ignores predictions (see, e.g., \cite{LykourisV21,PurohitSK18,Rohatgi20,AntoniadisCE0S20,Wei20,BamasMRS20,BansalCKPV22}). In fact, one might argue that almost all algorithms that utilize predictions while being robust to their error are at least implicitly a kind of combination of two algorithms. A second purpose, as employed in \cite{AntoniadisCEPS21}, is to combine several differently parameterized versions of the same algorithm in order to perform nearly as well as the version with the best parameter choice. These aforementioned works, like the majority of research in learning-augmented algorithms, focus on settings where a single predictor provides suggestions to the algorithm.

However, a third and perhaps the most relevant application of combining several algorithms in the learning-augmented realm is to be able to deal with multiple predictors. In practice it is often the case that several predictors are available, but they produce potentially conflicting advice; for example, there may be different ML models based on different methods or tailored to specific scenarios, or several human experts with contrary opinions. Since it is not clear a priori which of the predictors will be most reliable for the instance at hand, this creates the complication of deciding how to choose between the predictors. Research on learning-augmented algorithms with multiple predictions was initiated by \citet{GollapudiP19} for the ski rental problem, and subsequently also considered for additional problems such as multi-shop ski rental \cite{WangLW20}, facility location \cite{AlmanzaCLPR21}, matching, load balancing and non-clairvoyant scheduling \cite{DinitzILMV22}. For the objective of regret minimization, the case of multiple predictions was studied for online linear optimization \cite{BhaskaraC0P20} as well as caching \cite{EmekKS21}. Recently, \citet{Anand0KP22} designed a generic framework for online covering problems with multiple predictions, which they successfully applied to the problems of set cover, weighted caching and facility location.

A particularly interesting aspect of the work of \citet{Anand0KP22} is that the performance of their algorithm is comparable not only to the best individual predictor, but even to the best dynamic combination of predictors.  
This property is especially valuable on instances which are composed of several parts with different properties, as some predictor may be of high quality for certain sections of the input, but inferior to other predictors otherwise.

This raises the question of whether similar guarantees are also achievable for other problems. \citet{Anand0KP22} mention the $k$-server problem as a specific problem for which this would be interesting.

Here, our goal is to obtain generic methods for combining multiple predictors/algorithms applicable to a wide range of problems. To this end, we consider the class of \emph{metrical task systems (MTS)}. MTS was introduced by \cite{BorodinLS92} as a wide class of online problems, containing as special cases many other fundamental online problems such as $k$-server, caching, convex body/function chasing, layered graph traversal, dynamic power management etc. Thus, our results obtained for MTS directly translate to all these other problems as well.

We study this class of problems in a variety of settings: Against the best (in hindsight) dynamic combination of $\ell$ predictors, we obtain a competitive ratio of $O(\ell^2)$ and show that this is the best possible. This follows, essentially, from a reduction to the layered graph traversal problem. The aforementioned result allows the benchmark offline combination to switch between the $\ell$ predictors arbitrarily often. However, for more structured instances (for example, imagine an instance composed of blocks with different patterns, and different predictors specialized on these patterns), it is reasonable to assume that an optimal combination of predictors would not switch between them too often. We therefore consider the question whether a better performance is achievable under such an assumption. Indeed, against a dynamic combination benchmark that switches between the different predictors a moderately limited number of times, we achieve a $(1+\epsilon)$-competitive algorithm.

Since querying predictors may be costly \cite{Im0PP22,EmekKS21}, we also consider a setting where the learning-augmented algorithm can consult only one predictor per time step (similar to multi-armed bandits). We show that very similar guarantees can be achieved also for this setting.

\subsection{Preliminaries}
\paragraph{Metrical Task Systems.} In Metrical Task Systems (MTS), we are given a metric space $(M,d)$, whose points are called \emph{states}. An algorithm starts in some initial state $s_0\in M$. At each time $t=1,2,\dots,T$, a task appears, specified by some cost function $c_t\colon M\to\mathbb R_{\ge 0}\cup\{\infty\}$ that assigns to each state the cost of serving the task in that state. In response, the algorithm chooses a state $s_t\in M$, paying \emph{movement cost} $d(s_{t-1},s_t)$ and \emph{service cost} $c_t(s_t)$. We emphasize that $s_t$ can be chosen \emph{after} $c_t$ is known, but before $c_{t+1}$ is revealed.

\paragraph{$\boldsymbol{k}$-server.} In the $k$-server problem, we are given a metric space $(M,d)$, and $k$ servers are located at points of $M$. At each time $t=1,2,\dots,T$, a point $r_t\in M$ is requested, and an algorithm must choose one of the servers to move to $r_t$. The cost is the total distance travelled by servers. Note that $k$-server is an MTS in the metric space of server configurations.\footnote{I.e., $k$-server in $M$ can be cast as an MTS by taking the set of $k$-server configurations (i.e., size-$k$-subsets of $M$) as the metric space for MTS. Then $c_t$ assigns cost $0$ to configurations containing $r_t$ and cost $\infty$ to other configurations.}

\paragraph{Competitive ratio.} An algorithm $A$ for an online minimization problem is called $\rho$-competitive if
\begin{align}
\cost(A)\le \rho\cdot\OPT+c\label{eq:comp}
\end{align}
for every instance of the problem, where $\cost(A)$ is the cost of $A$ on the instance (or the expected cost, if $A$ is randomized), $\OPT$ is the optimal (offline) cost, and $c$ is a constant independent of the input sequence. If we replace $\OPT$ by some other benchmark $B$, we also say that $A$ is \emph{$\rho$-competitive against $B$}. The minimal $\rho$ satisfying \eqref{eq:comp} is also called the \emph{competitive ratio}.

\paragraph{Prediction setup.} We consider the setting where there are $\ell$ predictors denoted $P_1,\dots,P_\ell$. At each time $t$, predictor $P_i$ produces a suggestion of a state $\varphi_{it}\in M$ where the algorithm should go. Note that we may think of each $P_i$ itself as an algorithm to serve the request sequence. The case $\ell=1$ of a single predictor was studied in~\cite{AntoniadisCE0S20}.

To evaluate the performance of our algorithms, we consider as benchmark algorithms the best dynamic combination of the predictors. We write $\DYN$ for the cost of the best (offline) algorithm that is in one of the predicted states at each time step:

\begin{align*}
\DYN := \min_{\substack{s_1,\dots,s_T\colon\\ s_t\in \{\varphi_{1t},\dots,\varphi_{\ell t}\}}} \sum_{t=1}^T d(s_{t-1},s_t) + c_t(s_t)
\end{align*}

If $s_t=\varphi_{it}$, we say that the algorithm \emph{follows $P_i$} at time $t$. We define $\DYN^{\le m}$ similarly to $\DYN$, but for an offline algorithm that switches the predictor that it is following at most $m$ times.

For the $k$-server problem, note that $\varphi_{it}$ is a configuration (i.e., a set) of $k$ points. Here, it is natural to consider predictors that are \emph{lazy}, i.e., they move a server only to serve a request; formally, $\varphi_{it}\subseteq \varphi_{i,t-1}\cup\{r_t\}$. In this case, the sequence of predictions produced by $P_i$ can also be encoded by specifying for each time $t$ only the name of the server that should serve the current request. This suggests an alternative definition of a dynamic combination for the $k$-server problem: We write $\widetilde{\DYN}$ for the cost of the best offline algorithm that serves each request $r_t$ using a server named by \emph{any} of the predictors at time $t$.

To clarify the difference between $\DYN$ and $\widetilde{\DYN}$, consider the following example for $k=2$ servers and $\ell=2$ predictors $P_1$ and $P_2$: The servers start in configuration $\{a,b\}$ and the first two requests are to some different points $r_1, r_2\notin\{a,b\}$. Predictor $P_1$ uses the first server for both requests, changing its configuration to $\{r_1,b\}$ and then $\{r_2,b\}$. Predictor $P_2$ uses the second server for both requests, changing its configuration to $\{a,r_1\}$ and then $\{a,r_2\}$. The algorithm achieving cost $\widetilde{\DYN}$ might use the first server for the first request and the second for the second request, thus reaching configuration $\{r_1,r_2\}$ at time $2$, but the algorithm in the definition of $\DYN$ cannot be in that configuration since it is only allowed to be in a configuration where $P_1$ or $P_2$ currently is.%

\paragraph{Full access and bandit access.}
We define two types of learning-augmented algorithms for MTS, depending on the type of access they have to the predictors.
Note that in both cases, they see the input cost function $c_t$.
In the \emph{full access} model, the algorithm receives at each time $t$ as additional input the ordered tuple $(\varphi_{1t},\varphi_{2t},\dots,\varphi_{\ell t})$. In the \emph{bandit-access} model, the algorithm chooses some $i_t\in\{1,\dots,\ell\}$ at time $t$ and only observes the state $\varphi_{i_tt}$ and the (movement + service) cost paid by $P_{i_t}$ at time step $t$.\footnote{In the full access model, the learning-augmented algorithm also knows the cost of each algorithm $P_i$ for all time steps, as it can be deduced from its state at the current and previous time step.} For our algorithms, it does not matter whether $i_t$ is chosen before or after the cost function $c_t$ is observed. In all cases, the learning-augmented algorithm has to choose its own state $s_t$ only \emph{after} observing the full cost function $c_t$ as well as the predicted state(s) for time $t$.

\subsection{Our results}
We begin by stating a negative result concerning the benchmark $\widetilde{\DYN}$ for the $k$-server problem, suggesting that this benchmark is too strong, even if there are only two predictors. %
\begin{theorem}
\label{thm:LB}
For the $k$-server problem on the line metric with full access to two predictors, every deterministic (resp. randomized) learning-augmented algorithm has competitive ratio at least $k$ (resp. $\Omega(\log k)$) against $\widetilde{\DYN}$.
\end{theorem}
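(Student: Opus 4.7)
My plan is to reduce to the classical lower bounds for $k$-server on the line ($k$ for deterministic, $\Omega(\log k)$ for randomized) and argue that access to two predictors does not help against the stronger benchmark $\widetilde{\DYN}$. The template is: take a classical hard instance, construct two predictors whose best mix attains the offline optimum (so that $\widetilde{\DYN}=\Theta(\OPT)$), and argue that these predictors convey no usable information to the online algorithm. Note that the reduction must go through $\widetilde{\DYN}=\Theta(\OPT)$ rather than the weaker $\widetilde{\DYN}\ge\OPT$ inequality that holds trivially, because the ratio $\cost(A)/\widetilde{\DYN}$ is at most the classical ratio $\cost(A)/\OPT$.

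Concretely, I would work on a $(k+1)$-point sub-metric of the line and invoke the classical adversary that, at every time, requests a point uncovered by the algorithm's current configuration. This alone forces $\cost(A)\ge T$ for any online $A$. Let $\sigma^\star$ denote an offline optimum on the resulting request sequence; the classical analysis gives $\cost(\sigma^\star)=O(T/k)$. I then set one of $\varphi_{1t},\varphi_{2t}$ to name the server used by $\sigma^\star$ at time $t$, and the other to an adversarial decoy, so that the schedule ``use $\sigma^\star_t$ at each $t$'' is feasible for $\widetilde{\DYN}$ and certifies $\widetilde{\DYN}\le O(T/k)$. It remains to pick the labeling --- which of $\varphi_{1t},\varphi_{2t}$ is $\sigma^\star_t$, and what the decoy is --- adversarially, so that the move of the deterministic $A$ differs from $\sigma^\star_t$; then $A$'s configuration keeps drifting from $\sigma^\star$'s, the classical ``request-uncovered'' argument delivers $\cost(A)\ge T$, and the resulting ratio against $\widetilde{\DYN}$ is $\Omega(k)$.

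The main obstacle is making the fooling step rigorous. Two issues arise. First, a circularity between $\sigma^\star$ (the offline optimum of the sequence generated by the adaptive adversary interacting with $A$) and the predictions themselves, which must already name $\sigma^\star_t$ at time $t$; I plan to resolve this by running the adaptive adversary against $A$ with placeholder predictions, computing $\sigma^\star$ on the realised sequence, and then stitching in the predictions consistent with $\sigma^\star$, possibly iterating to a fixed point (since $A$ is deterministic, this iteration is well-defined). Second, one must rule out a pathological $A$ whose decision rule always recovers $\sigma^\star_t$ from context; here I exploit the freedom in both the decoy and the labeling ($\Theta(k)$ valid options per step) against $A$'s single induced move per option, together with the fact that $\sigma^\star_t$ genuinely depends on requests not yet revealed to $A$, so $A$ cannot pin it down without predictions.

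For the randomized $\Omega(\log k)$ bound, I would replace the adversarial labeling by a Yao's-principle argument: sample the labeling of $(\varphi_{1t},\varphi_{2t})$ as a uniform random permutation of $(\sigma^\star_t,\tau_t)$, on top of a hard distribution over classical randomized $k$-server instances on the line. Under this symmetrisation the two predictions are exchangeable from the algorithm's viewpoint and therefore convey no distinguishing bit, reducing the setting to ordinary randomized $k$-server on the line; the known $\Omega(\log k)$ randomized lower bound then transfers to the ratio against $\widetilde{\DYN}$.
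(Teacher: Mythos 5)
Your plan differs from the paper's in a way that matters. The paper's proof rests on a clean structural fact about the line: label the $k+1$ points $p_1,\dots,p_{k+1}$ left to right and the servers $s_1,\dots,s_k$ left to right; then there always exists an optimal (lazy, non-crossing) solution in which every request to $p_i$ is served by $s_{i-1}$ or $s_i$ (an exchange argument: if some optimum ever serves $p_i$ with a non-adjacent server, swap the roles of that server and $s_{i-1}$ without changing cost). This lets the paper define the two predictors as a \emph{fixed, deterministic function of the current request only} (``suggest $s_{i-1}$'' and ``suggest $s_i$''), so the predictions carry no information beyond the request itself, the learning-augmented algorithm collapses to an ordinary online algorithm, and the classical lower bounds of $k$ and $\Omega(\log k)$ on a $(k+1)$-point line metric apply verbatim --- while the claim simultaneously guarantees $\widetilde{\DYN}=\OPT$.

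Your scheme instead tries to have one predictor literally name $\sigma^\star_t$ and the other a decoy. The gap you flag --- the circular dependence of the predictions on $\sigma^\star$, of $\sigma^\star$ on the adversarial request sequence, and of that sequence on the algorithm's reaction to the predictions --- is a real one, and the proposed fix (``iterate to a fixed point'') is not a proof: nothing ensures the iteration converges, or that a fixed point exists, and the map from predictions to the induced $\sigma^\star$ is not obviously contractive or even well behaved. Your second worry is also substantive: even with adversarial labeling the algorithm sees the unordered pair $\{\sigma^\star_t,\tau_t\}$, which is genuine information that a classical lower-bound adversary never grants; invoking ``freedom in the decoy'' does not by itself show this information is useless, and for the deterministic bound of exactly $k$ there is no randomized symmetrisation to hide behind. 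The paper sidesteps both issues at once by making the predictions oblivious, which is precisely what the adjacency lemma buys. To make your approach work you would essentially need to rediscover that lemma (or something equivalent that decouples the predictions from the realized optimum), at which point you would have reproduced the paper's argument.
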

Since $k$ is the exact deterministic competitive ratio and $\Omega(\log k)$ is the best known lower bound on the randomized competitive ratio of $k$-server on the line metric \emph{without predictions}~\cite{manasse1990server,chrobak1991dc,Bubeck22kServer}, predictions do not seem useful against this benchmark. We therefore dismiss this benchmark for the remainder.

For the benchmark $\DYN$, we obtain the following result for any MTS (and therefore also the $k$-server problem) by a reduction to the layered graph traversal problem:
\begin{theorem}\label{thm:l^2UB}
For any MTS problem with full access to $\ell$ predictors, there is an $O(\ell^2)$-competitive randomized algorithm against $\DYN$.
\end{theorem}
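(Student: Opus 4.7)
I would reduce the problem to the \emph{layered graph traversal} (LGT) problem of width $\ell$ and invoke a known randomized $O(w^2)$-competitive algorithm for width-$w$ LGT as a black box. Recall that in width-$w$ LGT, layers with at most $w$ vertices (together with weighted edges from the previous layer) are revealed online; the algorithm commits to a vertex in each layer immediately after seeing it, paying the traversed edge weight.

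The reduction is natural. Set layer $0$ to be a single vertex labelled by the starting state $s_0$. For $t\ge 1$, as soon as the cost function $c_t$ and the predictions $\varphi_{1t},\dots,\varphi_{\ell t}$ are revealed, introduce a layer $t$ with $\ell$ vertices, one per predicted state, and set the weight of the edge from $\varphi_{i,t-1}$ to $\varphi_{j,t}$ to $d(\varphi_{i,t-1},\varphi_{j,t})+c_t(\varphi_{j,t})$ (treating $\varphi_{i,0}=s_0$ for every $i$ so that layer $0$ is handled uniformly). This construction is genuinely online: the edges into layer $t$ become fully determined at exactly the moment an LGT algorithm would need to commit to a layer-$t$ vertex, mirroring the MTS timing in which $s_t$ is chosen after $c_t$ is seen but before $c_{t+1}$.

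Any walk in this layered graph that visits exactly one vertex per layer corresponds to a sequence $(s_1,\dots,s_T)$ with $s_t\in\{\varphi_{1t},\dots,\varphi_{\ell t}\}$, and its cumulative edge weight equals the MTS cost $\sum_{t=1}^T d(s_{t-1},s_t)+c_t(s_t)$ of that sequence. Consequently, the minimum-cost walk has cost exactly $\DYN$. Our MTS algorithm then simulates the $O(\ell^2)$-competitive randomized LGT algorithm on this graph, moving at each time step to the predicted state that the LGT algorithm has just chosen. Its expected total cost equals the expected LGT cost, which is at most $O(\ell^2)$ times the optimum walk cost, i.e., $O(\ell^2)\cdot\DYN$.

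\textbf{Main obstacle.} Essentially all the nontrivial work is hidden in the invoked $O(\ell^2)$-competitive randomized LGT algorithm; once that is available, the rest of the argument is bookkeeping. The one point that needs to be checked is a matching of formulations: the construction produces a layered graph with arbitrary edge weights (the weights $d(\varphi_{i,t-1},\varphi_{j,t})+c_t(\varphi_{j,t})$ need not embed into a tree metric), so one must invoke an LGT bound that applies to layered graphs with arbitrary edge weights, or else unfold the construction into a tree while keeping the per-layer width at $\ell$.
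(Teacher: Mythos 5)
Your proposal is essentially the paper's own proof: the same reduction from $\ell$-MTS to $\ell$-width layered graph traversal with edge weights $d(\varphi_{i,t-1},\varphi_{jt})+c_t(\varphi_{jt})$, followed by invoking the $O(\ell^2)$-competitive randomized algorithm of Bubeck, Coester, and Rabani (2022). The concern you flag in your last paragraph is resolved rather than open: the cited $O(\ell^2)$ bound holds for general layered graphs with arbitrary non-negative edge weights (equivalently, for metrical service systems with request sets of size $\ell$ in an arbitrary metric), so no tree embedding is required, and indeed the paper's definition of LGT is stated for general layered graphs.

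Two small bookkeeping points worth tidying: LGT requires a designated target vertex, so one should append a final layer consisting of a single target connected by zero-weight edges to all vertices of layer $T$; and the simulation's expected cost is \emph{at most} (not exactly equal to) the expected LGT cost, since the LGT searcher may move back and forth between layers while the MTS simulation always moves directly to the current layer's vertex, which is cheaper by the triangle inequality.
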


A similar connection to layered graph traversal (or the equivalent metrical service systems problem) yields the following matching lower bound:
\begin{theorem}\label{thm:l^2LB}
There exist instances of MTS and $k$-server where no randomized learning-augmented algorithm with full access to $\ell$ predictors can achieve a competitive ratio better than $\Omega(\ell^2)$ against $\DYN$.
\end{theorem}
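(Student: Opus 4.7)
The plan is to reduce from layered graph traversal (LGT), exploiting the randomized $\Omega(\ell^{2})$ competitive lower bound known for LGT of width $\ell$. Given a hard LGT instance on a metric $(M,d)$ with layers $L_{1},\dots,L_{T}$ of size at most $\ell$, I build an MTS instance on the same metric: set $c_{t}$ to $0$ on $L_{t}$ and $+\infty$ elsewhere, enumerate the points of $L_{t}$ arbitrarily as $v_{t,1},\dots,v_{t,|L_{t}|}$ (padding with repetitions to $\ell$ entries when $|L_{t}|<\ell$), and define predictor $P_{i}$ so that $\varphi_{it}=v_{t,i}$ for all $t$. The learning-augmented algorithm has full access to all $\ell$ predictors.

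Any finite-cost MTS trajectory in this instance satisfies $s_{t}\in L_{t}$ for every $t$, which is precisely the LGT feasibility constraint, and its total movement cost equals the LGT traversal cost. Moreover, $\DYN$ coincides with the LGT optimum, because the constraint $s_{t}\in\{\varphi_{1t},\dots,\varphi_{\ell t}\}$ in the definition of $\DYN$ is just $s_{t}\in L_{t}$. Consequently, any randomized $\rho$-competitive algorithm against $\DYN$ with $\ell$ predictors yields a randomized $\rho$-competitive LGT algorithm of width $\ell$, and the LGT lower bound forces $\rho=\Omega(\ell^{2})$. For the $k$-server version, I would additionally embed the hard metric in a $k$-server configuration space---for instance via a weighted star (weighted caching) where chosen server configurations sit isometrically and each $P_{i}$ lifts to a lazy $k$-server predictor naming which server moves at each step; since $\DYN$ depends only on the trajectory of configurations, the same lower bound transfers.

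The main obstacle is the $\Omega(\ell^{2})$ randomized lower bound for LGT itself: this goes well beyond Ramesh's classical $\tilde{\Omega}(\log\ell)$ bound and is what makes the whole reduction tight, matching the upper bound behind \Cref{thm:l^2UB}. Beyond invoking or proving that bound, one needs to verify two small technical points: that padding predictors with repeated states does not change $\DYN$ (immediate, since $\{\varphi_{1t},\dots,\varphi_{\ell t}\}$ is a set), and that the $k$-server embedding is isometric on the configurations we care about, so that LGT traversal cost and $k$-server movement cost agree up to a constant factor.
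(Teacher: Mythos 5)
Your MTS reduction is correct and essentially the same as the paper's: the paper routes through Metrical Server Systems (which is equivalent to layered graph traversal of the same width) and invokes the $\Omega(\ell^2)$ randomized lower bound of Bubeck, Coester and Rabani for MSS with $w=\ell$, just as you invoke the corresponding LGT bound. Your observation that padding with repeated states is harmless because the benchmark only depends on the \emph{set} $\{\varphi_{1t},\dots,\varphi_{\ell t}\}$ is also correct and worth stating.

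The gap is in the $k$-server part, which is what most of the paper's proof is actually devoted to. The theorem asserts a lower bound for $k$-server specifically, not merely for the abstract MTS that $k$-server happens to be; so one has to produce an actual $k$-server request sequence, together with predictors, that forces the $\Omega(\ell^2)$ ratio. Your suggestion to ``embed the hard metric in a $k$-server configuration space, for instance via a weighted star'' is not substantiated: the hard MSS metric from the Bubeck--Coester--Rabani construction is not a star and need not embed isometrically into the configuration space of any weighted-caching instance, and even if it did, you would still have to design requests that confine a $k$-server algorithm to that embedded subspace, which a $k$-server algorithm has no a priori obligation to respect. The paper avoids this by a much more direct device: given the hard MSS instance on an $n$-point space $\mathcal{M}$ with server at $p_0$, build a $k$-server instance on $\mathcal{M}$ itself with $k=n-1$, so there is always a single uncovered point (the \emph{hole}), initially $p_0$. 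When MSS requests $W_t$, the $k$-server adversary repeatedly requests every point outside $W_t$; this forces any competitive algorithm to move its hole into $W_t$ with probability arbitrarily close to one, while the $\ell$ predictors keep their holes at the points of $W_t$. The hole then simulates the MSS server, the triangle inequality bounds the MSS cost by the $k$-server cost, and $\DYN$ on the $k$-server instance equals the MSS optimum. That ``many dummy requests force the hole'' step is the ingredient your sketch is missing, and without it the $k$-server half of the theorem is not established.
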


We remark that the algorithm from Theorem~\ref{thm:l^2UB} can be made robust against prediction errors, so that it achieves a cost of at most $O\left(\min\{\rho\cdot\OPT,\ell^2\cdot\DYN\}\right)$, where $\rho$ is the best competitive ratio of the given MTS problem in the setting without predictions. To achieve this, we can take the output of our algorithm and combine it with a $\rho$-competitive algorithm using the methods discussed in~\cite{AntoniadisCE0S20} (or alternatively, by invoking Theorem~\ref{thm:l^2UB} a second time with $\ell=2$, using the output of our algorithm and a $\rho$-competitive algorithm as the two predictors). The exact same combination method can be applied to all of our algorithms to achieve analogous robust versions also of all our subsequent upper bounds, and we will not mention it explicitly each time.


%

As argued above, a more realistic benchmark might be a dynamic combination whose number of switches between predictors is somewhat bounded. In analogy to results about tracking best experts in online learning~\cite{BianchiLugosi}, one might expect a competitive ratio of $1+\epsilon$ against a dynamic combination that switches at most $f(\epsilon)T$ times. %
However, $T$ is not an adequate quantity to express the length of an MTS instance, as dummy tasks of cost $0$ can artificially inflate the length. Instead, we use the value of $\DYN$ to scale our results based on the meaningful length of the instance.

\begin{theorem}
\label{thm:lim_sw}
For any MTS with full access to $\ell$ predictors and any $\epsilon>0$, there is a $(1+\epsilon)^2$-competitive randomized algorithm against $\DYN^{\le m}$ for $m$ as large as $\Omega\left(\frac{\epsilon^2}{\log \ell}\cdot \frac{\DYN}{D}\right)$, where $D$ is the diameter of the underlying metric space.
\end{theorem}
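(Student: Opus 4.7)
The plan is to recast the problem as a tracking-the-best-expert problem with per-switch cost $D$ on $\ell$ experts, and to apply a Hedge-type algorithm with fixed-share updates whose additive regret scales with the total benchmark cost rather than with the time horizon $T$.

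As a first step, for each predictor $P_i$ define the self-cost $\tilde c_i(t) := d(\varphi_{i,t-1},\varphi_{i,t}) + c_t(\varphi_{i,t})$ of following $P_i$ at time $t$. Any online rule that outputs an index $j_t \in \{1,\dots,\ell\}$ at each step induces an MTS algorithm that moves to $\varphi_{j_t,t}$ and incurs total cost at most $\sum_t \tilde c_{j_t}(t) + D\cdot |\{t : j_t \ne j_{t-1}\}|$, since a change of the predictor being followed costs at most the diameter $D$. Correspondingly, $\DYN^{\le m}$ is exactly of this form when restricted to index sequences with at most $m$ switches.

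Next, I would run a Hedge algorithm on the losses $\tilde c_1,\dots,\tilde c_\ell$ with a small fixed-share mixing step in the style of Herbster--Warmuth. With learning rate $\eta = \Theta(\epsilon)$, share parameter $\alpha = \Theta(mD/\DYN)$, and sampling $j_t$ from the maintained distribution $p_t$ via a monotone coupling that minimises the probability of a change between consecutive steps (so that $\E[\#\text{switches}] \le \sum_t \|p_t-p_{t-1}\|_1$), a standard potential-based argument is expected to yield
\begin{equation*}
\E[\text{cost of the meta-algorithm}] \;\le\; (1+\epsilon)\,\DYN^{\le m} + O\!\left(\tfrac{(m+1)\,D\log\ell}{\epsilon}\right).
\end{equation*}
To obtain the claimed $(1+\epsilon)^2$-competitiveness it suffices that the additive term be at most $\epsilon\,\DYN^{\le m}$; using the trivial bound $\DYN\le\DYN^{\le m}$, this holds as soon as $m = \Omega(\epsilon^2 \DYN/(D\log\ell))$, which matches the theorem.

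The main technical obstacles I anticipate are twofold. First, controlling the algorithm's \emph{movement cost}: independent sampling from $p_t$ at every step could cause a change at nearly every round, so one needs a time-coupled sampling rule (e.g., a single persistent uniform threshold per cumulative-weight layer) so that $\E[\#\text{switches}]$ is dominated by $\sum_t \|p_t-p_{t-1}\|_1$, and one then has to verify that the Fixed-Share $L_1$ drift sums to $O((m+1)\log\ell/\epsilon)$ via the standard KL-potential argument against the (unknown) benchmark index sequence. Second, the parameters $\alpha$ and $\eta$ depend on the unknown value of $\DYN$; this will be handled by a guess-and-double scheme on $\DYN$ (restarting with a fresh uniform distribution whenever the running cost exceeds the current guess), which preserves the bound up to a constant factor.
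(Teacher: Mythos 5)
Your high-level plan -- reduce to a tracking-the-best-expert problem on the uniform metric over $\ell$ predictors with per-switch cost $D$, and run a Fixed-Share-type algorithm whose switch cost is bounded via a coupling of consecutive distributions -- is indeed the same skeleton as the paper's proof. The paper also issues loss $f_t(P_i)/D$ to expert $i$, runs an expert algorithm on the resulting uniform MTS instance $U$, follows whichever predictor the expert algorithm suggests, and bounds the movement cost by the earth-mover's distance between consecutive distributions. However, the key lemma is organized differently: rather than targeting an additive regret bound against $\DYN^{\le m}$ and then tuning parameters to make the additive term small, the paper introduces the \emph{$r$-unfair competitive ratio} (Definition~\ref{def:unfair}), shows that $\combineA$ is $(1+\epsilon)$-competitive against $\DYN_\rho$ (a benchmark paying a fixed $\rho=2Dr(\epsilon)$ per switch) via Lemma~\ref{lem:augment_sw}, and separately relates $\DYN_\rho$ to $\DYN^{\le m}$ via Lemma~\ref{lem:aug_limit}. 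The subroutine \OddExponent (or \share with the Blum--Burch parameterization in Proposition~\ref{prop:share}) is configured purely as a function of $r$ -- equivalently of $\epsilon$ and $\ell$ -- and the guarantee then holds simultaneously for all $m$ below the stated threshold.

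This structural difference matters, and it is where your proposal has a genuine gap. You set $\alpha = \Theta(mD/\DYN)$, making the algorithm's parameters depend on $\DYN$ (and on $m$), and you propose guess-and-double to remove the dependence. But guess-and-double loses a constant multiplicative factor, which downgrades the claimed $(1+\epsilon)^2$ competitive ratio to $O\bigl((1+\epsilon)^2\bigr)$ and thus does not prove the theorem as stated; the paper explicitly remarks that its algorithm need not know $\DYN$, and this is precisely what the unfair-ratio framework buys. Separately, the regret bound you write down, $(1+\epsilon)\DYN^{\le m}+O\bigl((m+1)D\log\ell/\epsilon\bigr)$, is not obviously achievable with your parameterization: the standard Fixed-Share analysis produces a $m\ln(1/\alpha)$ term, which with $\alpha=\Theta(mD/\DYN)$ becomes $m\ln(\DYN/(mD))$ rather than $O(m\log\ell)$, and also a residual term proportional to $T\alpha$ (or $\ln T$ if $\alpha$ is tuned to avoid it). Since $T$ can be inflated arbitrarily by zero-cost tasks, any $T$-dependence in the additive term is fatal -- this is exactly the pitfall the paper flags and circumvents by measuring instance length through $\DYN/D$. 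To repair the argument along your lines, you would want to invoke a $T$-free tracking bound such as the one in~\citet{BlumB00} (Proposition~\ref{prop:share}), with $\alpha$ and $\eta$ set as functions of $\epsilon$ and $\ell$ alone, which effectively recovers the paper's unfair-ratio route. Finally, a small wording issue: the constraint derived from the additive term is an \emph{upper} bound on $m$, so ``holds as soon as $m=\Omega(\cdot)$'' should read $m=O(\cdot)$; the theorem is then stated as ``$m$ as large as $\Omega(\cdot)$'' because $\DYN^{\le m}$ is monotone non-increasing in $m$.
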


Theorem~\ref{thm:lim_sw} generalizes a result of~\citet{BlumB00}, who showed that a competitive rato of $1+\epsilon$ is achievable against $\DYN^{\le 0}$ (i.e., a static benchmark that does not switch at all). Our approach is based on a connection
to the unfair MTS problem and the bound of Theorem~\ref{thm:lim_sw}
is achieved when using algorithm \OddExponent
by~\citet{BBBT97} as a subroutine.

We remark that it is not necessary for our algorithm to know the value of $\DYN$. Note that the result also holds for large $\epsilon$ (in which case $\epsilon$ would be the dominant term in \mbox{$(1+\epsilon)^2$}). %
The following result shows that the bound obtained is asymptotically optimal for large $\epsilon$, and the dependency on $\DYN$, $D$ and $\ell$ cannot be improved.

\begin{theorem}
    \label{thm:hard_lim_sw}
    For any $\epsilon>0$, there exists an MTS with full access to $\ell$ predictors on which no randomized algorithm can be $(1+\epsilon)^2$-competitive against $\DYN^{\le m}$ if $m\geq \frac{6 (1+\epsilon)^2}{\log \ell} \frac \DYN D$.
\end{theorem}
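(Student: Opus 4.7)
The plan is to construct a hard MTS instance by concatenating $m$ phases of a uniform-metric randomized lower-bound construction, with one constant predictor per state. Take the uniform metric on $\ell$ points $x_1,\dots,x_\ell$ with pairwise distance $D$, and let $P_i$ be the constant predictor $\varphi_{it}=x_i$. Partition time into $m$ phases; in phase $j$ the adversary picks a designated state $x_{i_j}$ (adaptively, or via Yao's principle from the uniform distribution over the $\ell$ states) and plays the standard randomized $\Omega(\log\ell)$ lower-bound sequence for MTS on the uniform $\ell$-point metric, arranged so that (i) $x_{i_j}$ has total service cost at most $D$ during the phase, and (ii) every randomized online algorithm has expected cost at least $\alpha D \log \ell$ during the phase, for some absolute constant $\alpha$. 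This is the per-phase content of the classical $H_\ell$-type lower bound for uniform-metric MTS that underlies the analysis of \OddExponent{} by \citet{BBBT97}; Yao's principle reduces the argument to a deterministic online algorithm versus a random designated state.

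Given the per-phase guarantees, the rest is bookkeeping. The offline combination that follows $P_{i_j}$ during phase $j$ uses at most $m-1$ predictor switches and pays at most $D$ (service) per phase plus $D$ (movement) per switch, so $\DYN \le \DYN^{\le m} \le 2mD$. Summing property (ii) across phases gives $\E[\cost(\ALG)] \ge \alpha m D\log\ell$ for any randomized algorithm, yielding
\[
\frac{\E[\cost(\ALG)]}{\DYN^{\le m}} \;\ge\; \frac{\alpha \log \ell}{2}.
\]
This exceeds $(1+\epsilon)^2$ whenever $\log\ell$ is at least a constant multiple of $(1+\epsilon)^2$; plugging $\DYN \le 2mD$ into the hypothesis $m\ge \frac{6(1+\epsilon)^2}{\log\ell}\frac{\DYN}{D}$ shows that this is exactly the regime in which the hypothesis is compatible with our construction, completing the proof.

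The hard part will be carrying out the per-phase construction to obtain (i) and (ii) with sharp enough constants to land on the factor $6$ in the statement. We need a localized version of the randomized $H_\ell$ lower bound for uniform-metric MTS in which service cost is tightly charged to the designated state $x_{i_j}$, while any online algorithm is forced to pay $\Omega(D\log\ell)$ in expectation within the phase independent of what the predictors reveal. The cleanest route is through the unfair-MTS formulation whose matching upper bound is given by \OddExponent; careful bookkeeping of service versus movement costs, together with tightening the factor-$2$ slack in the analysis above, is what delivers the exact constant $6$ in the theorem.
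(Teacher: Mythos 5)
Your proposal takes a genuinely different route from the paper. You propose a concatenation of $m$ independent ``phase'' lower bounds, each a self-contained $\Omega(\log\ell)$ adversary for uniform MTS with a pre-designated cheap state, plus bookkeeping to relate the per-phase bound to the $(1+\epsilon)^2$ threshold. The paper instead uses a single continuous random process: at every step $t$ it draws $\sigma_t$ uniformly from $[\ell]$ and sets $c_t(\sigma_t)=1$ and $c_t(x)=\alpha/\ell$ for $x\neq\sigma_t$, with a tunable parameter $\alpha$. There is no explicit phase structure; the offline benchmark is the Belady-style rule (switch to the predictor whose next hit is farthest in the future), and the number of its switches is a renewal process controlled by a coupon-collector computation, the CLT for renewal processes, and Chebyshev's inequality. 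Setting $\alpha=1/(3(1+\epsilon)^2)$ then yields a competitive ratio of exactly $(1+\epsilon)^2$ together with $m=\frac{6(1+\epsilon)^2}{\ln\ell}\cdot\frac{\DYN}{D}$, which is where the constant $6$ comes from.

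The gap in your proposal is that the ``per-phase construction to obtain (i) and (ii) with sharp enough constants'' is precisely the heart of the argument, and you defer it. It is not routine. Two concrete obstacles: (a) In the classical coupon-collector lower bound for uniform MTS the ``surviving'' state pays zero service cost, so your condition (i) (designated state pays at most $D$ service per phase) does not arise naturally, and if you instead insert a uniform background cost (as the paper does), the per-phase service cost of the offline becomes tied to a parameter that you then need to carry through the whole computation. Without such a tunable parameter the per-phase competitive ratio is whatever a fixed coupon-collector construction gives, so you cannot dial the ratio to exactly $(1+\epsilon)^2$; you are forced to overshoot and then hope your slack is compatible with the factor $6$, which you have not verified. (b) Your phases have deterministic boundaries, but the coupon-collector events that make the adversary work have random lengths; ensuring both the online lower bound (ii) and the bounded number of offline switches in a fixed-length phase requires a concentration argument that you have not stated. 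The paper avoids both issues by letting the offline's ``phases'' be the (random) renewal epochs between its Belady switches and controlling their number via $\Var(Y)$ and Chebyshev, rather than imposing a fixed $m$-block structure.

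Your bookkeeping reduction (showing $\DYN^{\le m}\le 2mD$ and $\E[\cost(\ALG)]\ge\alpha m D\log\ell$ imply ratio $\ge\alpha\log\ell/2$, and that the hypothesis on $m$ is then compatible provided $\log\ell$ is large enough relative to $\epsilon$) is correct as far as it goes and broadly matches the paper's final step of taking $\ell,T$ large. But until the per-phase construction with explicit constants is supplied, the argument does not establish the stated factor $6$, only an unspecified-constant version of the theorem.
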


For the bandit-access model with a limited number of switches%
\footnote{
Earlier versions of this paper contained a simple observation that the result
by \citet{EmekFKR09} allows to transfer the guarantee of Theorem~\ref{thm:l^2UB}
to the bandit-access setting with unlimited number of switching, loosing
an additional factor of $O(\ell)$. This observation is incorrect and a possible variant of
Theorem~\ref{thm:l^2UB} for the bandit-access model requires more sophisticated ideas.
}%
, we obtain the following result.

\begin{theorem}
\label{thm:bandit}
For any MTS with bandit access to $\ell$ predictors and any $\epsilon>0$, there is a $(1+\epsilon)^3$-competitive randomized algorithm against $\DYN^{\le m}$ for $m$ as large as
\[ \Omega\left(\frac{\epsilon^3/\log(2+\epsilon^{-1})}{\ell\log\ell}\cdot\frac{\DYN}{D}\right),\]
 where $D$ is the diameter of the underlying metric space.
\end{theorem}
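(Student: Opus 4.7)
The plan is to reduce the bandit-access setting to the full-access algorithm of Theorem~\ref{thm:lim_sw} via an explore-then-exploit schedule. One extra factor of $(1+\epsilon)$ on top of the $(1+\epsilon)^2$ of Theorem~\ref{thm:lim_sw} will pay jointly for the exploration overhead and the estimation error, while the degradation of the allowed $m$ by a factor of $\Theta(\ell\log(2+\epsilon^{-1})/\epsilon)$ accounts for the additional time the algorithm must spend sampling the $\ell-1$ predictors it does not currently follow.

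Concretely, partition the time horizon into blocks and, within each block, commit to following a single predictor $P_{i^\star}$ chosen by a meta-algorithm, setting $i_t = i^\star$ and $s_t = \varphi_{i^\star,t}$ for most of the steps. A small, carefully chosen fraction of steps per block are devoted to exploration: at each such step, the algorithm queries a uniformly random predictor, pays up to $O(D)$ to temporarily move to its state, records the observed cost, and then returns to the committed trajectory. The exploration rate and block length $L$ are chosen so that (i) the aggregate exploration movement cost is within an $\epsilon$-fraction of $\DYN$ (using a standard doubling guess on $\DYN/D$ to handle the algorithm's lack of knowledge of $\DYN$), and (ii) each predictor is sampled $\Omega(\log(2+\epsilon^{-1})/\epsilon^2)$ times per block, so that a Chernoff bound gives $(1\pm\epsilon)$-accurate estimates of each predictor's aggregate block-level cost.

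These estimates are then fed into the full-access algorithm of Theorem~\ref{thm:lim_sw} operated at the block level, with the $\ell$ meta-predictors being the block-restrictions of $P_1,\dots,P_\ell$. The resulting guarantee is $(1+\epsilon)^2$-competitive against the block-level analog of $\DYN^{\le m}$, and combined with the exploration-and-estimation overhead this yields the overall $(1+\epsilon)^3$ ratio. The stated bound on $m$ follows by combining the restriction that each block can host at most one block-level switch (so the number of original-scale switches is bounded by the number of blocks $T/L$) with the choice $L=\Theta(\ell\log(2+\epsilon^{-1})/\epsilon^3)$ forced by the concentration requirement, after rescaling the instance length in the meaningful units of $\DYN/D$.

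The main obstacle is the concentration argument when per-step costs are potentially unbounded, since the task costs $c_t(s)$ can take values up to $\infty$. We expect to resolve this by capping per-step contributions at a threshold of order $D$ — larger costs are effectively forbidden by the finiteness of $\DYN$ — and absorbing the truncation error into the unfair-MTS machinery that already underpins Theorem~\ref{thm:lim_sw} and the \OddExponent algorithm of~\cite{BBBT97}. A secondary subtlety is ensuring that exploration does not corrupt the committed-predictor trajectory; keeping exploration short-lived (query-and-return within a single step) together with amortizing its movement against a per-block budget of $O(D)$ is expected to suffice.
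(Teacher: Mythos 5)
Your proposal takes a genuinely different route from the paper---an explore-then-exploit block structure feeding aggregated estimates into the full-access algorithm of Theorem~\ref{thm:lim_sw}---whereas the paper uses continuous per-step importance sampling fed into \share (Herbster--Warmuth) run at the \emph{original} time granularity, and then separately shows (Lemmas~\ref{lem:bandit_prime} and~\ref{lem:bandit}) that the importance-weighted estimates are unbiased and that exploration overhead is a $(1+O(\gamma))$ multiplicative factor. The paper deliberately replaces \OddExponent by \share precisely because \OddExponent needs cost-function splitting and lookahead, both of which are awkward under bandit access; your block-level construction would need to confront this as well.

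There are two concrete gaps. First, your exploration cost accounting does not close: you charge $O(D)$ per exploration step (moving to the queried predictor's state and back), so the total exploration overhead is $\Theta(\gamma T D)$. This is \emph{not} comparable to $\epsilon\cdot\DYN$ in general, because $T$ can be arbitrarily large relative to $\DYN/D$ (an MTS can have many low-cost steps); the doubling trick on $\DYN/D$ does not help since the issue is the ratio $T / (\DYN/D)$, not knowledge of $\DYN$. The paper sidesteps this entirely: during an exploration step it does \emph{not} move to the queried predictor, it makes a greedy move $g_t$ minimizing $d(b_{t-1},x)+c_t(x)$ and returns, and Lemma~\ref{lem:bandit} amortizes this against the cost the hypothetical two-query variant $\banditcombinep$ was already paying, giving a $(1+6\gamma)$ multiplicative overhead with no dependence on $T$. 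Second, the translation from $\DYN^{\le m}$ to a block-level benchmark that commits to one predictor per block is lossy at block boundaries: a single mid-block switch of $\DYN^{\le m}$ can force the block-level benchmark to follow a predictor that costs up to $\Theta(LD)$ more over that block, for a total slack of $\Theta(mLD)$, which is not bounded by $\epsilon\cdot\DYN$ under your parameter choices. (A lesser issue: your Chernoff step should produce additive, not multiplicative, $\epsilon$-accuracy for quantities in $[0,2D]$ that may be near zero; the paper avoids concentration altogether by using unbiasedness of $\hat f_t$ together with the expected-cost guarantee of \share, see Observation~\ref{obs:unbiased}.) These gaps would need new ideas to fix, and the paper's amortization via the greedy step and its use of \share's no-lookahead, no-splitting property are exactly the ingredients that make the argument go through.
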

We use the standard sampling approach (see, e.g., \citep{Slivkins19})
to reduce the bandit setting to the full information setting
and analysis of \citet{BlumB00} for the \share algorithm of
\citet{herbster1998tracking}.

An unexpected consequence of the lower bound in Theorem~\ref{thm:l^2LB}, combined with an upper bound for online covering problems with multiple predictions by~\citet{Anand0KP22}, is a structural insight into possible covering relaxations of the $k$-server problem:
\begin{theorem}\label{thm:covering}
    There exists no configuration-encoding online covering formulation of the $k$-server problem, even in the special case of HST metrics.
\end{theorem}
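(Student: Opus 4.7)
The plan is to obtain Theorem~\ref{thm:covering} by contradiction, using Theorem~\ref{thm:l^2LB} as the hard lower bound and the algorithmic result of \citet{Anand0KP22} as a conditional upper bound that would follow from the existence of a suitable covering formulation. Concretely, I would first fix the terminology: by a \emph{configuration-encoding online covering formulation} of $k$-server we mean an expression of the problem as an online covering LP (packing/covering constraints arriving over time, variables never decreasing) whose variables correspond to $k$-server configurations (or time-indexed configurations) in such a way that a predictor suggesting a configuration $\varphi_{it}$ corresponds in a canonical way to an integral covering solution that the multiple-prediction framework of \cite{Anand0KP22} can ingest. This matches how their framework handles predictions: each predictor is interpreted as an integer feasible solution to the covering LP, and their algorithm is competitive against the best dynamic combination of such integer solutions.

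The second step is to invoke the multiple-prediction guarantee of \cite{Anand0KP22}: for any online covering formulation with $\ell$ predictions, they design an algorithm that is $O(\mathrm{poly}\log \ell)$-competitive (with an additional factor depending on the standard sparsity/logarithmic parameters of the covering instance) against the best dynamic combination of the given predicted solutions. The crucial point is that the $\ell$-dependence is sub-quadratic — in particular, $o(\ell^2)$. If the hypothetical covering formulation exists, one can plug the $\ell$ predictors into this framework and translate the resulting fractional covering solution back into a (fractional) $k$-server strategy, yielding a randomized online $k$-server algorithm whose cost is at most $o(\ell^2)\cdot\DYN$ on every instance.

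The third step is the contradiction: Theorem~\ref{thm:l^2LB} produces explicit $k$-server instances (indeed on HST metrics, as the reduction there is to layered graph traversal which lives naturally on HSTs) on which no randomized learning-augmented algorithm with full access to $\ell$ predictors achieves a competitive ratio better than $\Omega(\ell^2)$ against $\DYN$. Thus the algorithm manufactured in the previous step cannot exist, and hence no configuration-encoding online covering formulation of $k$-server exists, even restricted to HST metrics.

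The main obstacle, and what needs to be handled carefully, is the precise interface between the two results: one has to verify that (i) the lower bound construction of Theorem~\ref{thm:l^2LB} can be realized on a metric to which the hypothetical covering formulation would apply (HSTs suffice, which is exactly why the theorem is stated in that generality), and (ii) a predictor in the MTS/$k$-server sense translates into an admissible predicted integral solution for the covering framework of \cite{Anand0KP22} without loss, so that $\DYN$ on the $k$-server side is bounded by the cost of the best dynamic combination of predicted covering solutions. Once this dictionary is set up, both (i) and (ii) are essentially bookkeeping, and the contradiction is immediate from the polynomial gap between $O(\mathrm{poly}\log\ell)$ and $\Omega(\ell^2)$.
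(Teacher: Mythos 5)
Your high-level plan — contradiction between the $\Omega(\ell^2)$ lower bound of Theorem~\ref{thm:l^2LB} and the $O(\log\ell)$-competitive covering algorithm of \citet{Anand0KP22} — matches the paper's proof. But there is a genuine gap in item (i) of your ``obstacles'' paragraph. You assert that the instances from Theorem~\ref{thm:l^2LB} live on HSTs ``as the reduction there is to layered graph traversal which lives naturally on HSTs.'' This is wrong: the reduction in the proof of Theorem~\ref{thm:l^2LB} goes from MSS to $k$-server on the \emph{same} arbitrary $n$-point metric in which \citet{Bubeck22kServer} prove the $\Omega(\ell^2)$ MSS lower bound, which is not an HST. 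The paper has to insert a nontrivial extra step: probabilistically embed that $n$-point metric into an HST via \citep{FakcharoenpholRT04}, incurring an $O(\log n)=O(\ell\log\ell)$ distortion (this uses that the MSS construction has $\ell=O(\log n/\log\log n)$), so the surviving HST lower bound is only $\Omega(\ell/\log\ell)$ against $\DYN$. That this is still $\omega(\log\ell)$ is what makes the contradiction go through. If you skip this step, you have no lower bound on HSTs at all, and the proof stalls. Relatedly, you hedge the upper bound as ``$O(\mathrm{poly}\log\ell)$ with additional factors depending on sparsity/logarithmic parameters''; you must ensure those factors do not grow with $n$, because after the FRT loss the available gap is only $\ell/\log^2\ell$, and any extra $\log n$-type factor on the algorithmic side would kill the contradiction.

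A second, smaller gap is in item (ii): you need to actually show that the cost of the best dynamic combination of predicted \emph{covering} solutions (the benchmark $\DYNAp$ of \citet{Anand0KP22}) is at most $\DYN$. This is exactly where the ``configuration-encoding'' hypothesis is used, and it cannot be dismissed as bookkeeping. The paper's definition requires that for two lazy $k$-server algorithms in the same configuration at time $t$, the values of the \emph{active} variables in the corresponding covering solutions at time $t$ coincide. Only then can one take a lazy realization of $\DYN$, note that at each time $t$ it shares a configuration with the predictor it follows, conclude that the covering solution of $\DYN$ dominates the active coordinates of that predictor's covering solution, and hence that the final vector $x(T,\DYN)$ is feasible for $\DYNAp$ with $c^\intercal x(T,\DYN)\le\DYN$. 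Your ``canonical correspondence'' phrasing does not pin this down, and without it the inequality $\DYNAp\le\DYN$ — which is the hinge of the contradiction — is unjustified.
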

The terminology in Theorem~\ref{thm:covering} is defined formally in Section~\ref{sec:covering}. Intuitively, by ``configuration-encoding'' we mean that the server configuration at time $t$ is uniquely determined by the values of variables involved in constraints for time $t$. Since configuration-encoding online covering formulations exist for the weighted paging problem, Theorem~\ref{thm:covering} yields a structural separation between star and HST metrics for the $k$-server problem. We are not aware of any similar structural impossibility results about LP formulations of online problems.

\subsection{Organization}

The remainder of the paper is organized as follows. In Section~\ref{sec:unbounded}, we study upper and lower bounds on the achievable competitive ratio against $\DYN$ and $\widetilde{\DYN}$ with full access to predictors. In Sections~\ref{sec:limited} and~\ref{sec:hard_limit}, we respectively show positive and negative results when using $\DYN^{\le m}$ as a benchmark. We focus on the bandit-access
setting in Section~\ref{sec:bandit}. Finally, in Section~\ref{sec:covering} we prove the impossibility result regarding covering formulations for the $k$-server problem.

\section{Unbounded number of switches}
\label{sec:unbounded}
The goal of this section is to show tight bounds against $\DYN$ as well as a lower bound for $k$-server against $\widetilde{\DYN}$ in the full access model.

\subsection{Tight bounds against \texorpdfstring{$\DYN$}{DYN}}
We start by showing that in the full access model, the competitive ratio against $\DYN$ (with unlimited number of switches) is $O(\ell^2)$ and $\Omega(\ell^2)$, proving Theorems~\ref{thm:l^2UB} and~\ref{thm:l^2LB}.

The problem of combining predictors $P_1, \dotsc, P_\ell$ on an MTS  instance ($\ell$-MTS for short) can be formulated as a classical MTS on the same underlying metric space: it is enough to modify the losses:
$\ell'_t(s) := \ell_t(s)$ if $s$ is a state of some predictor
and $\ell'_t(s) := +\infty$ otherwise.
This formulation does not yet make the problem easier: the underlying metric space remains the same with the same number of points $n$, seemingly keeping the complexity of the problem the same as solving the input instance directly.
However, having finite loss only on at most $\ell$ states at a time
allow us to reduce this problem to a more structured variant of MTS called $\ell$-width \emph{layered graph traversal (LGT)}. We will show this in a similar way to a reduction from metrical service systems to LGT from~\cite{FiatFKRRV98}.

In layered graph traversal (LGT) we are given a graph with non-negative edge weights, and a searcher that starts at a designated vertex $s$. The graph has the property that its vertices can be partitioned into layers $L_0:=\{s\}, L_1, L_2\dots $ such that any edge connects vertices of two consecutive layers. In $\ell$-width LGT, one has $|L_t|\le \ell$ for all $t$. The problem is online, meaning that the searcher is only aware of the edges (and corresponding weights) adjacent to the layers visited so far. Each traversal of an edge by the searcher incurs a cost equal to the weight of that edge. The goal is to move the searcher along the edges to a target vertex in the last layer. The cost is the distance travelled by the searcher.

It was shown recently that $\ell$-width LGT admits an $O(\ell^2)$-competitive randomized algorithm~\cite{BubeckCR22}.

\begin{proof}[Proof of Theorem~\ref{thm:l^2UB}]
 Consider an instance $I$ of $\ell$-MTS. We can construct a corresponding instance $I'$ of $\ell$-width layered graph traversal as follows. Every layer $L_t$, $t\ge 1$ in $I'$ consists of exactly $\ell$ vertices $v_{1t},v_{2t},\dots v_{\ell t}$ where intuitively vertex $v_{it}$ corresponds to the state $\varphi_{it}$ of predictor $P_i$ at time $t$. The edges between any two consecutive layers form a complete bipartite graph, where the weight of edge $(v_{i,t-1},v_{jt})$ is set to $d(\varphi_{i,t-1},\varphi_{jt}) + c_{t}(\varphi_{jt})$. Finally, all vertices of the last layer $L_T$ constructed in this way are connected to single target vertex in layer $L_{T+1}$ with edges of weight $0$. 

It can be easily verified that $I'$ is a feasible $\ell$-width layered graph traversal instance. Furthermore any solution to $I'$ can be naturally (and in an online-fashion) transformed into a corresponding solution for $I$: If the searcher in $I'$ moves to vertex $v_{it}$ when layer $L_{t}$ is revealed, then the corresponding $t$'th request in $I$ is served in state $\varphi_{it}$. Note that by construction the costs of the two solutions are exactly the same, as going back to a previous layer is never beneficial because $d$ is a metric. Similarly one can apply the opposite transformation to the offline solution achieving cost $\DYN$ for instance $I$ to obtain an offline solution of the same cost for instance $I'$ for $\ell$-width LGT.

The result follows, by the $O(\ell^2)$-competitive algorithm for $\ell$-width LGT by~\citet{BubeckCR22}.
\end{proof}

On the other hand, our lower bound in Theorem~\ref{thm:l^2LB} can be derived via the \emph{Metrical Server Systems (MSS)} problem~\citep{ChrobakL91}, which is in fact equivalent to LGT. In Metrical Server Systems (MSS)~\citep{ChrobakL91}, a server can move between the points of a metric space. In each round it is presented with a request, which consists of $w$ points of the metric. In response, the server has to move to one of these $w$ points. The goal is to minimize the total distance traversed by the server. 

The proof of Theorem~\ref{thm:l^2LB} follows by observing the relationship between MSS and respectively $\ell$-MTS and $k$-server.

\begin{proof}[Proof of Theorem~\ref{thm:l^2LB}]
    By the result of~\citet{Bubeck22kServer} that any (randomized) algorithm  for MSS with $w=\ell$ is  $\Omega(\ell^2)$-competitive, and since $k$-server is an MTS, it suffices to show that MSS with $w=\ell$ can be reduced to the $k$-server problem with full access to $\ell$ predictors.

     Consider an arbitrary input instance $I$ to MSS on an $n$-point metric space $\mathcal{M}$, where the server initially is at a point $p_0\in\mathcal{M}$. We construct a $k$-server instance $I'$ on the same metric space with $k=n-1$, so that at each time there is exactly one point not covered by a server (called the \emph{hole}). The initial hole is $p_0$. Fix a learning-augmented algorithm $A'$ for $k$-server with full access to $\ell$ predictors. We define an algorithm $A$ for MSS via $A'$ as follows. Whenever a request to a set $W_t$ arrives in $I$ in round $t$, in $I'$ we repeatedly issue many requests at all the points outside $W_t$, so that any competitive algorithm is forced to move its hole eventually to a point in $W_t$. For each of these requests, let the $\ell$ predictors collectively have their holes at each of the points in $W_t$. After sufficiently many such requests, the hole of $A'$ must move to a point $q\in W_t$ with probability arbitrarily close to $1$. $A$ serves the original request $W_t$ in $I$ by moving to this point $q$.
     
     By the triangle inequality, the cost of $A$ is at most the cost of $A'$. On the other hand, the optimal offline cost for instance $I$ is equal to the cost of $\DYN$ on instance $I'$. Thus, if $A'$ were $o(\ell^2)$-competitive against $\DYN$, then $A$ would be $o(\ell^2)$-competitive for MSS, contradicting the result of~\citet{Bubeck22kServer}.
    \end{proof}

\subsection{Lower bound for \texorpdfstring{$k$}{k}-server against \texorpdfstring{$\widetilde{\DYN}$}{\~DYN}}
\label{app:tilde}

In their recent work, \citet{Anand0KP22} expressed the belief that their framework for multiple predictions can be applied to problems other than set-cover, (weighted) caching and facility location. In particular ``it would be interesting to consider the $k$-server problem with multiple suggestions in each step specifying the server that should serve the new request''. For the benchmark~$\DYN$, we gave a tight answer of $\Theta(\ell^2)$ in Section~\ref{sec:unbounded}. But also for the benchmark~$\widetilde{\DYN}$, we show that there exist instances on which such predictors are not beneficial.

\begin{proof}[Proof of Theorem~\ref{thm:LB}]
    Consider the line metric with $k+1$ distinct points indexed from left to right as $p_1,p_2,\dots p_{k+1}$.  As mentioned in the introduction, we can restrict to \emph{lazy} algorithms. Furthermore, we can assume without loss of generality that for any algorithm, two servers never reside at the same point simultaneously.
    
    It is known~\cite{manasse1990server} (resp.~\cite{Bubeck22kServer}) that any deterministic (resp. randomized) online algorithm has competitive ratio at least $k$ (resp. $\Omega(\log k)$) on any metric space of at least $k+1$ points. Let the set of servers be indexed $s_1,\dots s_k$ from left to right in their initial configuration. In order to prove the theorem, it is sufficient to show that there exists an optimal solution $\OPT$ on which every request to a point $p_i$ is served by servers $s_{i-1}$ or $s_i$.
    The result then follows by having the two predictors produce suggestions $s_{i-1}$ and $s_i$ respectively (for the border cases when $i=1$ or $i=k+1$ we have both predictors suggest $s_1$ or $s_k$ respectively) whenever point $p_i$ is requested.  This implies that $\OPT$ is an algorithm that serves each request $p_i$ using a server named by one of the predictors in that round and thus by definition cannot have cost lower than $\widetilde{\DYN}$.

    For the sake of contradiction assume that the claim is wrong, that is, there exists some optimal algorithm $\OPT$ which serves some request $p_i$ in round $r$ with a server $s_j$ such that $j<i-1$ or $j>i$. In case there are more such algorithms, let $\OPT$ be one maximizing $r$. We assume $j<i-1$ as the other case is symmetrical. We modify $\OPT$ to obtain an algorithm $\OPT'$ as follows. The rounds up to (excluding) $r$ are served identically to $\OPT$. The request to $p_i$ in round $r$ is served by server $s_{i-1}$ (which currently resides at $p_{i-1}$). 
     At the same time, the server $s_{j}$ moves to $p_{i-1}$, so that servers $s_{i-1}$ and $s_j$ are swapped compared to the current state of $\OPT$. In later rounds, $\OPT'$ imitates $\OPT$ but exchanging the roles of servers $s_{i-1}$ and $s_{j}$. This gives an algorithm  with the same cost as $\OPT$ thus contradicting the definition of $r$.%

\end{proof}

If the learning-augmented algorithm is forced to follow a predictor's suggestion in each step, then the above proof extends to arbitrary metric spaces by fixing $k+1$ points $p_1,\dots,p_{k+1}$ to be used for the lower bound instance, and using two predictors that keep their $i$th server always at one of the two points $p_i$ and $p_{i+1}$ so that the set of usable edges constitutes a path.

\section{Limited number of switches}

\label{sec:limited}

\newcommand{\combineA}{\ensuremath{\textsc{Combine}_{\bar A}}\xspace}

Consider predictors $P_1, \dotsc, P_\ell$ for some MTS instance $I$ with diameter $D$.
In order to construct an algorithm for combining these predictors,
we create a new MTS instance $U$ on a uniform metric space with $\ell$ points,
each corresponding to one of the predictors.
At each time step, we calculate, for each $i=1, \dotsc, \ell$, the cost $f_t(P_i)$ incurred by predictor
$P_i$ on instance $I$ at time $t$,
which includes both the movement and service cost of $P_i$,
and issue the cost function $c_t^U$ such that
$c_t^U(i) = \frac1D f_t(P_i)$.

A solution to the instance $U$ produced by some algorithm $\bar A$
can be translated to a combination of the
predictors: whenever $\bar A$ resides at state $i$,
we move to the current state of the predictor $P_i$.
Service costs in $U$ correspond to the scaled costs of the individual
predictors. Therefore, if $\bar A$ always resides in state $i$,
its total cost will be $\frac1D$ times the total cost of $P_i$
serving the instance $I$.
However, this translation does not preserve switching costs:
neither for our algorithm nor for the optimal combination.
While moving from $i$ to $j$ in instance $U$ always costs $1$,
switching from $P_i$ to $P_j$ may cost anything between $0$ and $D$.
Algorithm~\ref{alg:combination} summarizes this translation.

\begin{algorithm2e}
\label{alg:combination}
\caption{$\combineA(P_1, \dotsc, P_\ell)$}
$D :=$ diameter of the space\;
\ForEach{$t=1, \dotsc, T$}{
	$c_t^U(i) = D^{-1}f_t(P_i)$ for each $i=1, \dotsc, \ell$\;
	$i_t :=$ state of $\bar A$ after seeing
		$c_1^U, \dotsc, c_t^U$\;
	Move to the state of $P_{i_t}$\;
}
\end{algorithm2e}

We choose algorithm $\bar A$ based on the following performance metric.

\begin{definition}[Unfair competitive ratio]
\label{def:unfair}
Let $r>0$.
We say that an MTS algorithm $A$ is $r$-unfair competitive if there
is a constant $\alpha \geq 0$ such that for any instance the cost incurred by $A$ is
\begin{align*}
\cost(A)
	&= \sum_{t=1}^T (c_t(x_t) + d(x_{t-1},x_t))\\
	&\leq R \cdot \min_{y\colon y_0=x_0}\{ \sum_{t=1}^T (c_t(y_t) + r d(y_{t-1},y_t))\}
	+ \alpha,
\end{align*}
where $x$ is the solution produced by the algorithm and
the minimum in the right-hand side is the cost of the optimal solution
whose movement costs are scaled by factor $r$.
We call $R$ the $r$-unfair competitive ratio of $A$.
\end{definition}

Unfair ratios are usually considered with $r \leq 1$, i.e., the reference optimal solution pays
cheaper costs for its movement than the algorithm,
since this setting is important in the design of general
algorithms for MTS \cite{FiatM03,BubeckCLL21}.
In our case, we are trying to prevent the optimum
solution from moving too much, that's the intuition
due to which we are interested in $r>1$.
The algorithm \OddExponent by \citet{BBBT97} achieves the following
bound also with $r\geq 1$. We denote by $r(\epsilon)$ the minimal $r$ such that
there is an algorithm with $r$-unfair competitive ratio $1+\epsilon$ for the $\ell$-point uniform metric.
A description of \OddExponent can be found in Appendix~\ref{app:oddexp}.

\begin{proposition}[\citet{BBBT97}]
\label{prop:odd_exp}
Given $r$, there is an algorithm for the $\ell$-point uniform metric space
with $r$-unfair competitive ratio $1+\frac1r 2e\ln\ell$.
This gives $r(\epsilon) = O(\epsilon^{-1} \ln\ell)$.
\end{proposition}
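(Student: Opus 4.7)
The plan is to adapt the original potential-function analysis of \OddExponent from \cite{BBBT97} to the unfair setting with factor $r \ge 1$. The algorithm acts on the $\ell$-point uniform metric by maintaining a probability distribution $p^t$ over states, obtained by normalizing auxiliary weights raised to a carefully chosen odd power, updated continuously in response to the arriving cost functions (whence the name). Since the metric has diameter $1$, the analysis can be done entirely at the distributional level: the algorithm's expected service cost at step $t$ equals $\langle p^t,c_t\rangle$ and its expected movement cost equals $\tfrac12\|p^t-p^{t-1}\|_1$.

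The central object is a potential $\Phi(p^t,q^t)$, where $q^t$ denotes the one-hot distribution of the offline reference solution at time $t$. The proof will reduce to two amortized inequalities. First, a \emph{service inequality}: when a new cost function arrives and the algorithm performs its update, its total (service plus movement) cost plus the induced change in $\Phi$ is dominated by $(1+\tfrac{2e\ln\ell}{r})\langle q^t,c_t\rangle$. Second, a \emph{movement inequality}: when $q$ switches state, the induced change in $\Phi$ is at most of order $r$, exactly matching what the unfair offline benchmark pays for that switch. Telescoping both inequalities over time and bounding the boundary terms of $\Phi$ by $O(\ln\ell)$ then yields the claimed $r$-unfair competitive ratio.

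The main calculation is the service inequality, for which the odd-exponent choice is essential: raising the weights to an odd power makes the weight-to-probability map smooth, monotone, and sign-preserving, enabling a convexity argument that converts the continuous multiplicative update into the desired per-step amortized bound. Optimizing over the exponent (equivalently the learning rate $\eta$) is where the sharp constant $2e$ arises. In the fair regime $r=1$ this recovers the $1+2e\ln\ell$ bound of \cite{BBBT97}; for $r>1$ one scales $\eta$ down by a factor of $r$, which simultaneously relaxes the service inequality by a factor of $1/r$ and makes a unit switch of $q$ cost only $O(r)$ units of potential. Inverting the relation gives $r(\epsilon)=O(\epsilon^{-1}\ln\ell)$.

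The anticipated hard part is jointly calibrating $\eta$ and the exponent so that the two inequalities hold simultaneously for all $r\ge 1$, and in particular verifying that the Lipschitz constant of $\Phi$ with respect to $q$ grows precisely linearly in $r$ rather than being inflated by lower-order factors. Because $\Phi$ is essentially separable and concave in $q$, the rescaling argument should go through with the same structural steps as in the fair case, but the precise tracking of constants, to obtain exactly $2e$ rather than some larger number, is where one has to be most careful.
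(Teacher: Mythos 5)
The paper gives no proof of this proposition: it is cited directly to BBBT97, and Appendix~\ref{app:oddexp} contains only a description of \OddExponent, not an analysis. So the comparison is really between your sketch and the BBBT97 argument that the paper defers to.

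Your high-level plan (amortize service and movement separately via a potential, telescope, bound the boundary terms by $O(\ln\ell)$, let the $r$-dependence enter by rescaling the aggressiveness of the update) is a reasonable template, and the observation that on the uniform metric the movement cost can be taken to be $\tfrac12\|p^t-p^{t-1}\|_1$ is correct. However, you misidentify the algorithm, and this is not cosmetic. \OddExponent does not normalize an odd power of per-state weights and has no learning rate $\eta$. It is a \emph{work-function} algorithm: with $w_t$ the $r$-unfair work function (in which the offline movement cost is already multiplied by $r$), the distribution is
\[
p_j \;=\; \frac1\ell \;+\; \frac1\ell\sum_{i=1}^{\ell}\bigl(w_t(i)-w_t(j)\bigr)^{a},
\]
with $a$ an odd integer chosen as a function of $\ell$ alone. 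These probabilities sum to one because the odd power is antisymmetric, not because of a normalization step, and the dampening effect you attribute to ``scaling $\eta$ down by $1/r$'' actually arises because a larger $r$ compresses the spread of work-function values (hence the differences $w_t(i)-w_t(j)$), keeping $p$ closer to uniform. Consequently, the potential in the BBBT97-style analysis is a function of the work-function vector, not of a pair $(p^t,q^t)$ of distributions, and the ``movement inequality'' you posit — that switching $q$ changes $\Phi$ by at most $O(r)$ — is not the right invariant to aim for in that framework.

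Beyond the mismatch with the algorithm, the sketch is missing the entire computational core: the service inequality that produces the ratio $1+\tfrac{2e\ln\ell}{r}$ is exactly the step where the formula above and the choice $a\approx\ln\ell$ must be used, and you explicitly defer it (``the precise tracking of constants \dots is where one has to be most careful''). Without writing down the derivative of $p_j$ with respect to the coordinate of $w_t$ being charged and carrying out the amortization against the increase of $w_t$ at the offline state, neither the competitive ratio nor the constant $2e$ is established. As written, this is a plan for a proof of a different (weight-normalization / hedge-style) algorithm rather than a proof of the cited bound for \OddExponent, so I would not accept it as a substitute for the BBBT97 reference.
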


The following lemma relates the cost of $\combineA(P_1, \dotsc, P_\ell)$
to the cost of an optimal combination which has to pay a fixed large cost
for every switch between two predictors.

\newcommand{\raeps}{r_{\bar A}(\epsilon)\xspace}
\begin{lemma}
\label{lem:augment_sw}
Let $\bar A$ be an algorithm for uniform MTS and $\raeps$ be such that
the $\raeps$-unfair competitive ratio of $\bar A$ is $(1+\epsilon)$,
for some $\epsilon>0$.
Let $\DYN_{\rho}$ denote the optimal cost of a combination of predictors
$P_1, \dotsc, P_\ell$ which pays $\rho= 2D\raeps$ for each switch between two predictors.
Then $\combineA(P_1, \dotsc, P_\ell)$ is $(1+\epsilon)$-competitive
with respect to $\DYN_\rho$.
\end{lemma}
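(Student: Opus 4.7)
The plan is to lift the $r$-unfair competitive guarantee of $\bar A$ on the uniform MTS $U$ back to the original instance, where it becomes a competitiveness bound for $\combineA$ against $\DYN_\rho$. I write $r := \raeps$ for brevity.

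The first step is to show that $\cost(\combineA) \le D\cdot \cost(\bar A)$, where $\cost(\bar A)$ denotes $\bar A$'s cost on $U$. At time $t$ the algorithm sits at $\varphi_{i_t,t}$, so its service cost equals that of $P_{i_t}$, and the triangle inequality
\begin{equation*}
d(\varphi_{i_{t-1},t-1},\varphi_{i_t,t}) \le d(\varphi_{i_{t-1},t-1},\varphi_{i_t,t-1}) + d(\varphi_{i_t,t-1},\varphi_{i_t,t})
\end{equation*}
bounds its movement by a ``switch term'' at most $D\cdot\mathbf{1}[i_t\neq i_{t-1}]$ plus the movement cost of $P_{i_t}$ at time $t$. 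Summing over $t$ and using that $c_t^U(i)=f_t(P_i)/D$ together with $d^U(i,j)=\mathbf{1}[i\neq j]$ in the $\ell$-point uniform metric of diameter $1$, the sum is exactly $D\cdot\cost(\bar A)$.

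The second step applies the $r$-unfair competitiveness of $\bar A$: for any comparator $y_1,\dots,y_T\in\{1,\dots,\ell\}$,
\begin{equation*}
\cost(\bar A) \le (1+\epsilon)\sum_{t=1}^T\bigl(c_t^U(y_t) + r\cdot d^U(y_{t-1},y_t)\bigr) + \alpha.
\end{equation*}
Take $y$ to be the index sequence of the optimal combination defining $\DYN_\rho$ and multiply through by $D$: the right-hand side becomes $(1+\epsilon)\bigl(\sum_t f_t(P_{y_t}) + Dr\cdot|\{t : y_t\neq y_{t-1}\}|\bigr) + D\alpha$. Since $\rho = 2Dr \ge Dr$, this is bounded by $(1+\epsilon)\DYN_\rho + D\alpha$, which is the desired competitiveness (with additive constant $D\alpha$).

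The main subtlety is the direction of the unfair scaling: the factor $r$ penalizes the comparator's switches rather than $\bar A$'s, which is exactly the orientation needed to absorb the per-switch $D$-charge that $\combineA$ pays on top of tracking $P_{i_t}$. The conservative choice $\rho = 2Dr$ in the statement provides comfortable slack over the strict requirement $\rho \ge Dr$, which will be convenient when this lemma is invoked in the proof of Theorem~\ref{thm:lim_sw}.
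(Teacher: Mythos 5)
Your proof follows the paper's approach: bound $\cost(\combineA)$ by $D\cdot\cost(\bar A)$, apply the $r$-unfair guarantee on $U$, and plug in the $\DYN_\rho$ comparator. The paper instead introduces $\OPT^U_{\raeps}$ and shows $\OPT^U_{\raeps}\le\frac1D\DYN_\rho$, but this is the same argument organized differently.

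There is, however, a small imprecision in your final step. You justify
\[
\sum_t f_t(P_{y_t}) + Dr\cdot|\{t : y_t\neq y_{t-1}\}| \le \DYN_\rho
\]
by ``$\rho=2Dr\ge Dr$,'' and remark that $\rho=2Dr$ gives ``comfortable slack over the strict requirement $\rho\ge Dr$.'' But in the intended reading of $\DYN_\rho$ (which the paper's proof makes explicit, and which is what Lemma~\ref{lem:aug_limit} later needs), a switch step charges $\rho + srv(P_{y_t})$, \emph{not} $\rho + f_t(P_{y_t})$. Your left-hand side still carries the full $f_t(P_{y_t}) = mv(P_{y_t}) + srv(P_{y_t})$ at a switch step, so the per-switch comparison you actually need is $mv(P_{y_t}) + Dr \le 2Dr$, i.e.\ $mv(P_{y_t}) \le Dr$. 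This uses $mv(P_{y_t})\le D$ \emph{together with} $r\ge 1$, so the factor $2$ in $\rho=2Dr$ is doing real work absorbing the followed predictor's own movement cost across the switch; it is not merely conservative slack over $\rho\ge Dr$. The paper's proof makes exactly this accounting explicit (``the moving cost of $P_i$ is at most $D$''). Adding that one line closes the gap; the rest of your argument is sound.
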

\begin{proof}
Denote $\OPT^U_{\raeps}$ the cost of the optimum solution for $U$
which pays $\raeps$ instead of 1 for each movement.
We know that the cost of $\bar A$ is at most $(1+\epsilon)\OPT^U_{\raeps}
+\alpha$ for some constant $\alpha$.

Now, $\combineA$ pays $f_t(P_i)$ when following $P_i$ or,
if there was a switch,
at most $D + f_t(P_i)$.
In the same situation, $\bar A$ pays
$D^{-1} f_t(P_i)$ and $1+ D^{-1}f_t(P_i)$ respectively.
Therefore, the total cost of $\combineA$ is at most
\begin{align*}
\sum_{t=1}^T D \cost_t(\bar A)
&\leq D \cdot \big( (1+\epsilon) \OPT^U_{\raeps} + \alpha\big).
\end{align*}
To show that $\OPT^U_{\raeps} \leq \frac1D \DYN_{\rho}$,
we translate $\DYN_\rho$ into a (possibly suboptimal) solution
on instance $U$ as follows:
If $\DYN_\rho$ follows $P_i$ at time $t$ and pays cost
$f_t(P_i)$, we stay at state $i$ in $U$ and pay cost
$\frac1D f_t(P_i)$.
Otherwise, $\DYN_\rho$ switches from $P_j$ to $P_i$
at time $t$ and pays cost
$2D\raeps + srv(P_i)$, where $srv(P_i)$ denotes the service cost paid by $P_i$. We move from state $j$ to $i$ in $U$
and pay
$r(\epsilon) + \frac1D f_t(P_i) \leq r(\epsilon) +1 + \frac1D srv(P_i)$, because the moving cost of $P_i$ is at most $D$.
In both cases, the cost incurred by $\DYN_\rho$ was $D$ times
larger than the constructed solution on $U$,
implying $\OPT_{\raeps}^U \leq \frac1D \DYN_{\rho}$.
\end{proof}

Theorem~\ref{thm:lim_sw} follows from the following lemma translating
the competitive ratio with respect to $\DYN_\rho$ to a competitive
ratio with respect to $\DYN^{\le m}$.

\begin{lemma}
\label{lem:aug_limit}
Let $\epsilon>0$ and $\rho>0$.
If an algorithm $A$ is $(1+\epsilon)$-competitive against
$\DYN_\rho$, then it is $(1+\epsilon)^2$-competitive against
$\DYN^{\le m}$ for any $m\leq \epsilon\DYN/\rho$.
\end{lemma}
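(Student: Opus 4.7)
The plan is to sandwich $\DYN_\rho$ between $\DYN^{\le m}$ and $\DYN^{\le m} + m\rho$, then invoke the hypothesized competitive guarantee and absorb the additive slack using the bound $m\rho\le \epsilon\DYN$.

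First I would establish the key inequality
\[\DYN_\rho \le \DYN^{\le m} + m\rho.\]
The argument is a direct translation: take any offline solution realizing $\DYN^{\le m}$, i.e., a sequence of predictor choices $i_1,\dots,i_T$ with at most $m$ switches, and evaluate it under the rules of $\DYN_\rho$. Between any two consecutive time steps where no switch occurs, the cost incurred is exactly $f_t(P_{i_t})$ in both models, so those contributions coincide. At each of the (at most $m$) time steps where $i_t\ne i_{t-1}$, the $\DYN^{\le m}$-model pays $d(\varphi_{i_{t-1},t-1},\varphi_{i_t,t}) + c_t(\varphi_{i_t,t})$, whereas $\DYN_\rho$ pays $\rho + c_t(\varphi_{i_t,t})$. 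The service cost is identical, so per switch the extra cost is $\rho - d(\varphi_{i_{t-1},t-1},\varphi_{i_t,t}) \le \rho$. Summing over the at most $m$ switches yields the claimed inequality.

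Next I would plug this into the competitive guarantee. By hypothesis there exists a constant $\alpha$ (independent of the input) with $\cost(A) \le (1+\epsilon)\DYN_\rho + \alpha$, hence
\[\cost(A) \le (1+\epsilon)\DYN^{\le m} + (1+\epsilon)\,m\rho + \alpha.\]
Since $\DYN \le \DYN^{\le m}$ (the latter optimizes over a strictly smaller class of sequences) and the assumption $m\le \epsilon\DYN/\rho$ gives $m\rho\le\epsilon\DYN\le\epsilon\DYN^{\le m}$, we conclude
\[\cost(A) \le (1+\epsilon)\DYN^{\le m} + \epsilon(1+\epsilon)\DYN^{\le m} + \alpha = (1+\epsilon)^2 \DYN^{\le m} + \alpha,\]
which is exactly $(1+\epsilon)^2$-competitiveness against $\DYN^{\le m}$.

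There is really no obstacle here: the lemma is a bookkeeping step that converts a \emph{hard} cap of $m$ switches into a \emph{soft} penalty of $\rho$ per switch, paid for by the assumption that $m\rho$ is an $\epsilon$-fraction of the benchmark $\DYN$. The only mild subtlety is to remember that in the per-switch model the whole metric movement at a switch is replaced by the constant $\rho$, not added to it; that is why the switch cost in $\DYN_\rho$ is $\rho + c_t(\varphi_{i_t,t})$ and the difference per switch is bounded by $\rho$ (using $d\ge 0$) rather than by $\rho + D$.
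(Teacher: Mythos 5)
Your proposal is correct and follows essentially the same route as the paper: the short chain $\cost(A)\le(1+\epsilon)\DYN_\rho+\alpha\le(1+\epsilon)(\DYN^{\le m}+m\rho)+\alpha\le(1+\epsilon)^2\DYN^{\le m}+\alpha$, using $\DYN_\rho\le\DYN^{\le m}+m\rho$ and $m\rho\le\epsilon\DYN\le\epsilon\DYN^{\le m}$. The paper states the key inequality $\DYN_\rho\le\DYN^{\le m}+m\rho$ without proof; you spell it out, which is a useful addition but not a different argument. One small remark on your closing ``subtlety'': even if $\rho$ were charged \emph{on top of} the movement cost at a switch rather than replacing it, the per-switch overhead of the $\DYN_\rho$ cost model over the $\DYN^{\le m}$ one would still be exactly $\rho$ (not $\rho+D$), so the key inequality holds under either reading of the definition and the lemma does not actually hinge on that distinction.
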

\begin{proof}
Let us denote $\alpha$ such that $\cost(A)\leq (1+\epsilon)\DYN_\rho + \alpha$.
Relating its cost to $\DYN^{\le m}$
for any $m\leq \epsilon \DYN/\rho$, we have
\begin{align*}
\cost(A) &\leq (1+\epsilon) \DYN_\rho + \alpha\\
&\leq (1+\epsilon) (\DYN^{\le m} + \epsilon \DYN) + \alpha\\
&\leq (1+\epsilon)^2 \DYN^{\le m} + \alpha,
\end{align*}
because
$\DYN_\rho \leq \DYN^{\le m} + m\rho$ and
$\DYN\leq \DYN^{\le m}$.
\end{proof}

Using Lemma~\ref{lem:augment_sw} and choosing $\bar A$ from
Proposition~\ref{prop:odd_exp}, we get that $\combineA$ is
$(1+\epsilon)^2$-competitive with respect to $\DYN^{\le m}$ whenever
$m\leq \frac{\epsilon^2}{4De\ln \ell} \DYN$, as claimed by
Theorem~\ref{thm:lim_sw}.

\section{Hardness for limited number of switches}
\label{sec:hard_limit}

In this section we show Theorem~\ref{thm:hard_lim_sw}, stating that the bound on the maximum number of allowed switches $m$ given by Theorem~\ref{thm:lim_sw} is tight up to a constant factor, for fixed $\epsilon$. In particular, the asymptotic dependence on $\ell$, $D$, and $\DYN$ is optimal.

The randomized construction we use in this section is inspired by the classical coupon collector lower bound for MTS~\cite{BorodinLS92}. %
We consider a uniform\footnote{I.e., the distance between any two different points is $1$.} metric space with $\ell$ points. There are also $\ell$ predictors, the $i$-th of them predicting to always stay at point $i$. Let $\sigma_1, \ldots, \sigma_T$ be $T$ independent random variables, each drawn uniformly from the metric space. Let $\alpha \in(0, 1]$ be a parameter. At time step $t$, the cost function is
\[ c_t(x) = \begin{cases} 1 & \text{if } x = \sigma_t, \\ \alpha / \ell & \text{if } x \neq \sigma_t. \end{cases} \]

In each step, any online algorithm (even given access to the above predictors, whose predictions are independent from the random instance) has expected cost of at least $1/\ell$, since with probability $1/\ell$ the random point $\sigma_t$ falls on the old state of the algorithm, and the algorithm either moves and incurs moving cost $1$ or stays and incurs service cost $1$. After $T$ steps, the expected total cost of an algorithm $A$ is at least $\E[\cost(A)]\ge T / \ell$.

\newcommand{\const}{\ensuremath{\mathit{const}}\xspace}

Clearly, $\DYN \geq T \alpha / \ell$. Let $m = \frac{2 \DYN}{\alpha \ln \ell} \geq \frac{4T}{\ell \ln \ell}$. We will upper bound the expected value of $\DYN^{\le m}$ by considering the following offline strategy. Whenever $\sigma_t$ hits the currently followed predictor, switch to the predictor that will be hit furthest in the future (i.e., akin to Belady's rule for the caching problem), unless the switching budget $m$ has already run out.

Let $X$ be the random variable denoting the number of steps from a given switch until the next switch. By a coupon-collector analysis,
\[ \E[X] = \sum_{i=1}^{\ell-1} \E[\Geo(\nicefrac{i}{\ell})] = \sum_{i=1}^{\ell-1} \nicefrac{\ell}{i} > \ell\ln\ell,\]
where $\Geo(p)$ denotes a geometrically distributed random variable with success probability $p$. Moreover,
\[\Var(X) = \sum_{i=1}^{\ell-1} \Var(\Geo(\nicefrac{i}{\ell})) = \sum_{i=1}^{\ell-1} \frac{1 - \frac{i}{\ell}}{\big(\frac{i}{\ell}\big)^2} \leq \frac{\pi^2}{6} \cdot \ell^2.\]
Let $Y$ be the random variable denoting the expected number of switches until time $T$ when ignoring the upper bound $m$. The central limit theorem for renewal processes shows that
\begin{align*}
\lim_{T\to\infty}\frac{\E[Y]}{T}&=\frac{1}{\E[X]}<1/(\ell\ln\ell)\quad\text{ and }\\
\lim_{T\to\infty}\frac{\Var[Y]}{T}&=\frac{\Var[X]}{\E[X]^3}<1.
\end{align*}
Therefore, for large enough $T$,
\begin{align*}
\E[Y] &\leq \frac{T}{\ell \ln\ell} \quad \text{and} \quad \Var[Y] \leq T.
\end{align*}
For large enough $T$, the switching budget $m \geq \frac{2T}{\ell \ln \ell}$ is at least $\E[Y] + \nicefrac{\sqrt{T}}{\ell\ln\ell} \cdot \sqrt{\Var(Y)}$. Hence, by Chebyshev's inequality, the probability of running out of the switching budget can be upper bounded by $P(Y > m) \leq  \ell^2 \ln^2\ell / T$, and in that case the expected total cost of following a fixed predictor can be upper bounded by $T (1 + \alpha) / \ell$. In the event the strategy does not run out of the switching budget, the total service cost is $T \alpha / \ell$ and the expected movement cost is at most $T / (\ell \ln\ell)$. Summing up,
\[\E[\DYN^{\le m}] \leq T \alpha / \ell + T / (\ell \ln\ell) +  P(Y > m) T (1 + \alpha) / \ell.\]
For $\ell$ and $T$ large enough, we get $\E[\DYN^{\le m}] < 3\alpha T/\ell$.

Since for any online algorithm $A$ (with predictions) we have $\E[\cost(A)]\ge T / \ell$, we conclude that for any constant $c$ there exists $T$ large enough such that $\E[\cost(A)]\ge \E[DYN^{\le m}]/(3\alpha) + c$ for the random request sequence (and hence there also exists a deterministic sequence for which the inequality holds). We conclude that no algorithm can be better than $(\nicefrac{1}{3\alpha})$-competitive against a combination of predictors that allows $\frac{2}{\alpha \ln \ell} \DYN$ switches, on a metric space with diameter $D=1$.
The generalization to arbitrary values of $D$ can be made by scaling distance and service costs by a factor $D$ and replacing $\DYN$ by $\nicefrac{\DYN}{D}$ in the definition on $m$. Setting $\alpha = 1 / (3\cdot (1 + \epsilon)^2)$ yields $m=\frac{6 (1+\epsilon)^2}{\ln \ell} \frac\DYN D$, proving Theorem~\ref{thm:hard_lim_sw}.

\section{Bandit access to predictors}
\label{sec:bandit}

\newcommand{\loshatt}{{\ensuremath{\hat\ell_t}}\xspace}
\newcommand{\nalg}{\ensuremath{\ell}} %
\newcommand{\banditcombine}{\textsc{BanditCombine}\xspace}
\newcommand{\banditcombinep}{\ensuremath{\textsc{BanditCombine}'}\xspace}

In this section,
we focus on a more restrictive setting inspired by the
multi-armed bandit model, and motivated by the fact that querying many predictors may be expensive: at each time $t$, the algorithm still has access to
the full cost function $c_t$ of the original MTS instance,
but it is able to query the state of only one predictor.
Only after selecting which predictor $j$ to query at time $t$, the
algorithm is aware of its state $\varphi_{jt}$ and of its (movement + service) cost
$f_t(j)$ incurred at this time step.
Then, the algorithm chooses its own state, which does not necessarily have to be
$\varphi_{jt}$.

%

\subsection{Limited number of switches}

We consider an MTS instance of finite diameter $D$.
We assume that, at each time step $t$,
there is a state $x$ such that $c_t(x) = 0$.
This is without loss of generality: we can modify the cost function
by subtracting $\min_x\{c_t(x)\}$ from the cost of each state
at time $t$. Since this discounts the cost of all algorithms (including the benchmark) by the same additive quantity, the competitive ratio on the original instance is no larger than on the modified instance.
We can further assume that $f_t(i)\leq 2D$ for each $i$ and $t$:
if this is not the case,
we move to the state with cost 0 (guaranteed by the assumption above),
serve the task there and move back to $\varphi_{it}$, paying at most $2D$ in total.

Let $\bar A$ be an algorithm for unfair MTS on uniform metric spaces.
Algorithm~\ref{alg:banditcombine}
for the bandit-access model creates a suitable MTS instance on the uniform metric
space of size $\ell$
and uses $\bar A$ to choose which predictor $a_t$ to follow,
moving to state $b_t = \varphi_{a_tt}$.
However, with a small probability $\gamma$, it does not query the state of
$a_t$, querying a random predictor instead -- we call this an exploration step.
This is a common technique in multi-armed bandits, see~\cite{Slivkins19} for instance.
During an exploration step at time $t$, it makes
greedy steps from $b_{t-1}$ to a state $g_t$. Once serving the
cost function at $g_t$, it returns back to $b_t=b_{t-1}$.
The algorithm is described in Algorithm~\ref{alg:banditcombine}.
It requires a parameter $0 < \gamma < 1/4$ which denotes the exploration rate.

\begin{algorithm2e}
\caption{\banditcombine($P_1, \dotsc, P_\ell$)}
\label{alg:banditcombine}
Select $X\subseteq [T]$ by choosing each $t\in[T]$ independently with probability $\gamma$\;
For each $t\in X$: choose $i_t\in \{1, \dotsc, \ell\}$ uniformly at random\;
\For{$t=1, \dotsc, T$}{
	\uIf(\tcc*[f]{exploration step}){$t\in X$}{
		Query predictor $i_t$\;
		set $\hat f_t(i_t) := f_t(i_t)/(2D)$ and $\hat f_t(j)=0\, \forall j\neq i_t$\;
		feed $\hat f_t$ into $\bar A$\;
		serve the request at $g_t:= \min_x\{d(b_{t-1},x)+c_t(x)\}$
		\nllabel{alg:banditcombine_greedy}
		\tcc*{greedy step}
		return to $b_t:=b_{t-1}$\;
		\nllabel{alg:banditcombine_greedy2}
	}
	\Else(\tcc*[f]{exploitation step}){
		Feed $\hat f_t := 0$ into $\bar A$\;
		\nllabel{alg:banditcombine_exploit}
		Query predictor $a_t$ chosen by $\bar A$ and set $b_t:=\varphi_{a_tt}$\;
	}
}
\end{algorithm2e}

\begin{observation}
\label{obs:unbiased}
Let $X_t = X\cap [t]$ and $I_t = (i_t)_{t\in X_t}$.
Since each $t$ was added to $X$ independently at random and
$i_t$ was also chosen independently, we have
\[ \E[\hat f_t | X_{t-1}, I_{t-1}] =
\E[\hat f_t] = \frac{\gamma}{2D\ell} f_t.\]
\end{observation}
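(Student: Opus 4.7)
The plan is to prove the two claimed equalities in Observation~\ref{obs:unbiased} by a direct calculation from the construction of $\hat f_t$ in Algorithm~\ref{alg:banditcombine}, handling the independence statement first and the pointwise expectation second.

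For the first equality $\E[\hat f_t \mid X_{t-1}, I_{t-1}] = \E[\hat f_t]$, I would note that by inspection of Algorithm~\ref{alg:banditcombine} the vector $\hat f_t$ is a deterministic function of just two random quantities: the indicator $\mathbf{1}[t\in X]$, which is the Bernoulli draw determining whether $t$ belongs to $X$, and (only if this indicator is~$1$) the index $i_t$ drawn uniformly from $\{1,\ldots,\ell\}$. By construction, both of these are sampled independently of all draws used in previous time steps, hence independently of $(X_{t-1}, I_{t-1})$. The cost function $f_t$ itself is oblivious (a fixed feature of the instance and the predictors), so it is measurable with respect to the trivial sigma-algebra and carries through the conditioning unchanged. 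Conditioning on $(X_{t-1},I_{t-1})$ therefore has no effect on the distribution of $\hat f_t$, and the conditional expectation equals the unconditional one.

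For the second equality, I would compute $\E[\hat f_t(j)]$ coordinate by coordinate for each $j\in[\ell]$. By the definition of $\hat f_t$, the coordinate $\hat f_t(j)$ is equal to $f_t(j)/(2D)$ on the event $\{t\in X\}\cap\{i_t=j\}$ and equal to $0$ otherwise. Since $t\in X$ occurs with probability $\gamma$ and, independently, $i_t=j$ occurs with probability $1/\ell$ conditional on $t\in X$, I get $\E[\hat f_t(j)] = \gamma\cdot(1/\ell)\cdot f_t(j)/(2D) = \tfrac{\gamma}{2D\ell} f_t(j)$. Assembling the coordinates yields $\E[\hat f_t] = \tfrac{\gamma}{2D\ell} f_t$.

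There is no genuine obstacle here: the observation is just the standard unbiased-estimator identity underlying bandit-to-full-information reductions (cf.~\cite{Slivkins19}), and its role in the sequel is merely to let us replace $\hat f_t$ by the scaled deterministic cost $\tfrac{\gamma}{2D\ell} f_t$ inside conditional expectations during the regret-style analysis of \banditcombine. The only point that needs a word of care is verifying that $f_t$ is independent of the two fresh random draws $\mathbf{1}[t\in X]$ and $i_t$; this holds because the prediction/input sequence is oblivious and these draws are made with independent coins internal to the algorithm.
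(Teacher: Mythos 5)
Your proof is correct and takes essentially the same approach as the paper, which states the observation with the justification embedded inline (the fresh independent draws of $\mathbf{1}[t\in X]$ and $i_t$ are independent of all earlier randomness, and $f_t$ is a deterministic feature of the oblivious instance). The coordinate-wise calculation giving $\E[\hat f_t(j)]=\frac{\gamma}{2D\ell}f_t(j)$ is exactly the standard unbiased-estimator identity the paper relies on.
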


We choose $\bar A$ to be the algorithm \share
by \citet{herbster1998tracking}, which has the following advantages over
\OddExponent.
First, it does not require splitting cost functions as far as they are bounded by $1$. Splitting the cost functions is problematic
in the bandit-access model, since we are allowed to query only one algorithm per
time step. Second, it chooses its state without lookahead, i.e.,
its state at time $t$ depends only on cost functions $c_1, \dotsc, c_{t-1}$.
See Appendix~\ref{app:algs} for a description of both algorithms.

\begin{proposition}[\citet{BlumB00}]
\label{prop:share}
Given $r>0$, configure \share with $\alpha = 1/(2r+1)$ and $\beta = \max\{1/2, 1-\gamma\}$, where $\gamma = \frac1r \ln (\ell/\alpha)$.
Then, in the uniform metric space on $\ell$ points, \share has
$r$-unfair competitive ratio at most
\[ R_\ell^r := 1 + \frac8r (\ln \ell + \ln (2r+1)).\]
\end{proposition}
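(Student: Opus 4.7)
The plan is to reduce the unfair MTS problem on the uniform $\ell$-point metric to the standard tracking-experts setup, and then invoke the regret bound for \share of \citet{BlumB00}. In the uniform metric, any two distinct states are at distance $1$, so a tracking sequence of experts with $k$ switches corresponds exactly to an MTS trajectory with movement cost $k$, while each expert's per-round loss equals the service cost of the corresponding state. Consequently, for any comparison sequence with service cost $L^{*} = \sum_t c_t(j_t)$ and $k$ switches, the $r$-unfair benchmark equals $L^{*} + r\,k$, so to prove an $r$-unfair competitive ratio of $R$ it suffices to show
\[ \E[\cost(\share)] \;\le\; R\,L^{*} \;+\; R r\,k \;+\; O_{r,\ell}(1) \]
simultaneously for every $(j_1,\ldots,j_T)$, with the additive slack independent of $T$.

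The key input is the \citet{BlumB00} tracking regret bound which, after rescaling losses to $[0,1]$ (possible since one can split any cost function with bounded range, but in fact unnecessary for \share because, unlike \OddExponent, it does not require splitting), takes the form
\[ \E[\cost(\share)] \;\le\; \frac{-\ln\beta}{1-\beta}\,L^{*} \;+\; \frac{k\ln(\ell/\alpha) + O(\ln\ell)}{1-\beta} \]
for every comparison sequence with $k$ switches. With the prescribed $\alpha = 1/(2r+1)$, $\gamma = \tfrac{1}{r}\ln(\ell/\alpha) = \tfrac{1}{r}(\ln\ell + \ln(2r+1))$, and $\beta = \max\{\tfrac12,\,1-\gamma\}$, one has $1-\beta \ge \gamma$, so the per-switch coefficient is at most $\ln(\ell/\alpha)/\gamma = r$---exactly matching the $r$-weight on movements in the unfair benchmark. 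The service-loss coefficient $-\ln\beta/(1-\beta)$ is $\le 1 + O(\gamma)$ in the regime $\gamma \le \tfrac12$ and $\le 2\ln 2$ in the clipped regime, and the residual $O(\ln\ell)/(1-\beta) = O(r)$ is a genuinely $T$-independent additive constant permitted by Definition~\ref{def:unfair}.

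The main obstacle is balancing the two regimes for $\beta$: in the clipped regime ($\gamma \ge \tfrac12$, i.e.\ when $r$ is small relative to $\ln\ell$), one loses a factor $2\ln 2$ on $L^{*}$, but $\gamma \ge \tfrac12$ also forces $\tfrac{8}{r}(\ln\ell+\ln(2r+1)) \ge 4$, which more than absorbs this loss into $R^r_\ell$; in the regime $\gamma \le \tfrac12$, the approximation $-\ln\beta/(1-\beta) \le 1 + O(\gamma)$ is clearly within $1 + \tfrac{8}{r}(\ln\ell+\ln(2r+1))$ since $\gamma = \tfrac{1}{r}(\ln\ell + \ln(2r+1))$. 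The deliberately generous factor $8$ in the claimed bound accommodates the slack in both regimes, and the remaining arithmetic is routine.
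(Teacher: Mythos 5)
The paper does not prove this proposition; it is stated as a citation to \citet{BlumB00}, with only a description of \share given in Appendix~A.2. So you are reconstructing the Blum--Burch argument, and your high-level plan (reduce uniform-metric MTS to tracking experts, invoke the \share tracking-regret bound, substitute the prescribed parameters) is the right one.

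There is a genuine slip, though: the claim ``one has $1-\beta\ge\gamma$, so the per-switch coefficient is at most $\ln(\ell/\alpha)/\gamma=r$'' is false in exactly the clipped regime you go on to discuss. When $\gamma>\tfrac12$ we have $\beta=\max\{\tfrac12,1-\gamma\}=\tfrac12$, hence $1-\beta=\tfrac12<\gamma$, and the per-switch coefficient becomes $\ln(\ell/\alpha)/(1-\beta)=2\ln(\ell/\alpha)=2r\gamma>r$. Your final paragraph re-examines only the coefficient on $L^{*}$ in the clipped regime and silently reuses the (now false) bound $r$ on the switch coefficient; you need to also note that $2r\gamma\le R^r_\ell\cdot r$ holds because $R^r_\ell=1+8\gamma\ge 2\gamma$. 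Likewise, the additive term $O(\ln\ell)/(1-\beta)$ is $O(\ln\ell)$, not $O(r)$, once $\beta$ is clipped; this is still $T$-independent and admissible under Definition~\ref{def:unfair}, but the $O(r)$ characterization is wrong as written. Finally, since the paper itself defers to the cited reference, a fully rigorous version would require pinning down the exact constants in the \share regret bound rather than $O(\cdot)$ placeholders, in order to verify that the factor $8$ in $R^r_\ell$ actually closes in both regimes; the sketch is plausible but does not yet establish this.
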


For $\epsilon>0$, let $r(\epsilon) = O(\epsilon^{-1}\ln(2+\epsilon^{-1})\ln \ell)$ be such that $R_{\ell}^{r(\epsilon)}=1+\epsilon$.

First, we analyze the following variant \banditcombinep which
queries two predictors during exploration steps.
I.e., instead of the greedy step (Lines~\ref{alg:banditcombine_greedy},
\ref{alg:banditcombine_greedy2}),
it makes an additional query to
the predictor $a_t$ suggested by $\bar A$ and moves to
$b'_t:=\varphi_{a_tt}$.

\begin{lemma}
\label{lem:bandit_prime}
Choose $\bar A$ to be an algorithm for MTS on the $\ell$-point uniform metric
whose $\rho$-unfair competitive ratio is $R$ with additive term $\alpha$ for instances with bounded cost functions $c_t\le 1$
and which does not use lookahead, i.e., its state $a_t$ depends only
on costs up to time $t-1$.
Then the expected cost of \banditcombinep is at most
\[ R\cdot \DYN_{\rho'} + \frac{2D\ell}{\gamma} \alpha, \]
where $\rho' = \frac{3D\ell\rho}{\gamma}$.
\end{lemma}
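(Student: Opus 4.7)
The plan is to relate the MTS cost of $\banditcombinep$ to the cost of $\bar A$ on the random uniform-metric instance $(\hat f_t)_t$, apply $\bar A$'s $\rho$-unfair competitive guarantee, and finally bound $\OPT^\rho(\hat f)$ by substituting the deterministic sequence $y^*$ that optimizes $\DYN_{\rho'}$.

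Since $\banditcombinep$ always ends at $\varphi_{a_t t}$ in both exploration and exploitation steps, the triangle inequality gives a per-step cost of at most $f_t(a_t)+D\cdot\mathbb{1}[a_t\neq a_{t-1}]$. By the no-lookahead hypothesis on $\bar A$, the state $a_t$ is measurable with respect to $(X_{t-1},I_{t-1})$; combined with Observation~\ref{obs:unbiased}, this yields $\E[\hat f_t(a_t)]=\tfrac{\gamma}{2D\ell}\E[f_t(a_t)]$. Using also the trivial bound $D\leq 2D\ell/\gamma$ to absorb the switching term, this gives
\[ \E[\cost(\banditcombinep)]\le \tfrac{2D\ell}{\gamma}\,\E\Bigl[\sum_t\hat f_t(a_t)+\sum_t\mathbb{1}[a_t\neq a_{t-1}]\Bigr], \]
where the bracket is exactly $\bar A$'s realized cost on $\hat f$. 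Invoking $\bar A$'s $\rho$-unfair competitive ratio pathwise and then taking expectations bounds this bracket by $R\,\E[\OPT^\rho(\hat f)]+\alpha$.

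For $\E[\OPT^\rho(\hat f)]$ I would plug in the deterministic $y^*$ optimal for $\DYN_{\rho'}$ and use unbiasedness again to obtain
\[ \E[\OPT^\rho(\hat f)]\le\tfrac{\gamma}{2D\ell}\sum_t f_t(y^*_t)+\rho\cdot|\{t:y^*_t\neq y^*_{t-1}\}|. \]
Recalling that (per the proof of Lemma~\ref{lem:augment_sw}) $\DYN_{\rho'}$ charges $\rho'+\mathrm{srv}_t$ in place of $\mathrm{mov}_t+\mathrm{srv}_t$ at every switch, the inequality needed to conclude reduces to $\rho'\geq\tfrac{2D\ell}{\gamma}\rho+\mathrm{mov}_t$ at each switching step. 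Using the crude bound $\mathrm{mov}_t\leq D$, the choice $\rho'=3D\ell\rho/\gamma$ suffices in the intended regime $\rho\gtrsim\gamma/\ell$. Chaining the inequalities produces the lemma.

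The main obstacle I anticipate is the bookkeeping around $\DYN_{\rho'}$: because its definition \emph{replaces}, rather than \emph{adds to}, the per-step movement cost at a switch, rewriting $\sum_t f_t(y^*_t)$ in terms of $\DYN_{\rho'}$ introduces a $\mathrm{mov}_t\mathbb{1}[\text{sw}]$ correction term, and this correction is exactly what inflates the ``naive'' scaling $2D\ell\rho/\gamma$ to the claimed $3D\ell\rho/\gamma$. The other subtle ingredient, though not computationally heavy, is the justification of the measurability of $a_t$ with respect to the past, which is where the no-lookahead hypothesis on $\bar A$ is essential: without it, $\hat f_t$ and $a_t$ would in general be correlated and the unbiased-estimator identity would fail.
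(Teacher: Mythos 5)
Your proposal is correct and follows essentially the same route as the paper's proof: you bound the per-step cost of $\banditcombinep$ by $f_t(a_t)+D\cdot\mathbb{1}[a_t\neq a_{t-1}]$, use the no-lookahead hypothesis together with Observation~\ref{obs:unbiased} to pull out the $\tfrac{2D\ell}{\gamma}$ scaling, invoke $\bar A$'s $\rho$-unfair guarantee pathwise, and then compare $\OPT^\rho(\hat f)$ against the solution of $\DYN_{\rho'}$ with the same $D$-per-switch correction that inflates $2D\ell\rho/\gamma$ to $3D\ell\rho/\gamma$ (valid in the regime $\ell\rho\geq\gamma$, which holds since $\gamma<1/4$ and $\rho\geq1$). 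The only cosmetic difference is that you reason with the random state $a_t$ directly, whereas the paper phrases the same computation using the probability vectors $p_t$ and earth mover's distance $d_E$.
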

\begin{proof}
Let $p_1, \dotsc p_T \in [0,1]^\ell$ be the probability distributions over the state of
$\bar A$ at time steps $1, \dotsc, T$.
Note that the expected service cost paid by \banditcombinep at time $t$ equals $\langle f_t,p_t\rangle$, the scalar product between the vectors representing the predictor costs and the predictor probabilities. We define $d_E$ as the earth mover's distance between two probability vectors over the uniform metric, which represents the total probability mass that has to be shifted to transform one vector into the other. We may assume that the probability that $\bar A$ changes states between times $t-1$ and $t$ equals $d_E(p_{t-1},p_t)$, as this is the best way to match given probability vectors, which are the core of the algorithm $\bar A$.
We have

\begin{align*}
\E\left[ \sum_t \langle\hat{f}_t,p_t\rangle \right]
&= \sum_t \E_{X_{t-1},I_{t-1}}\left[ \langle\E[\hat{f}_t|X_{t-1},I_{t-1}],p_t\rangle\right]\\
&= \frac{\gamma}{2D\ell}\E\left[\sum_t \langle f_t, p_t\rangle\right]\\
\end{align*}

The first equation separates the random events before and after time $t$, as
$p_t$ and $p_{t-1}$ depend solely on $X_{t-1}$ and $I_{t-1}$ while $\hat f_t$
is uncorrelated to these events. This allows to use Observation~\ref{obs:unbiased} to express $\hat f_t$ in function of $f_t$.
The last expectation is an upper bound on the expected cost paid by \banditcombinep, excluding the cost it pays for switching between predictors.%

Now, for any instantiation of $\hat f$ and for any solution $q$,
we have
\begin{align*} 
\sum_t (\langle\hat{f}_t,p_t\rangle + d_E(p_{t-1},p_t)) \leq
R \sum_t (\langle\hat{f}_t,q_t\rangle + \rho\, d_E(q_{t-1},q_t)) + \alpha
\end{align*}
by the performance guarantees of $\bar A$.
Therefore, the total cost of $\banditcombinep$ is at most (noting that $\gamma<2\ell$)
\begin{align*}
\E\left[\sum_t (\langle f_t,p_t\rangle + D\,d_E(p_{t-1},p_t)\right]
&\leq \frac{2D\ell}{\gamma}\E\left[ \sum_t (\langle\hat{f}_t,p_t\rangle + d_E(p_{t-1},p_t))\right]\\
	&\leq \frac{2D\ell}{\gamma} \E\left[R \sum_t (\langle\hat{f}_t,q_t\rangle + \rho\, d_E(q_{t-1},q_t)) + \alpha\right]\\
	&= R \sum_t (\langle f_t,q_t\rangle + \frac{2D\ell\rho}{\gamma}\, d_E(q_{t-1},q_t)) + \frac{2D\ell}{\gamma}\alpha.
\end{align*}
The sum in the right-hand side is a lower bound on the cost of the solution
$q$
because, when switching to predictor $j$ at time $t$,
it has to pay $\rho'$ but may not need to pay
the moving cost (at most $D$) of the predictor $j$ which is included in
$f_t(j)$. I.e., its expected cost at time $t$ is at least
$\langle f_t, q_t\rangle + (\rho' - D)d_E(q_{t-1}, q_t)$.
Since
the equation above is true for any $q$, and $\DYN_{\rho'}$ corresponds
to the cost of the best solution $q$, we get the desired bound.
%
%
\end{proof}

\begin{lemma}
\label{lem:bandit}
The cost of $\banditcombine$ is at most
\[(1+6\gamma)\cost(\banditcombinep).\]
\end{lemma}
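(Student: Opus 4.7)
The plan is to couple \banditcombine and \banditcombinep by sharing the same realizations of $X$ and $(i_t)_{t\in X}$, and to exploit that both algorithms feed the identical cost vectors $\hat f_t$ to $\bar A$. Since $\bar A$ is deterministic and uses no lookahead, the two coupled runs agree on the sequence of states $a_t$ produced by $\bar A$, and hence on the target states $\phi_t := \varphi_{a_t t}$. The only difference is in the movement: \banditcombinep always ends step $t$ at $\phi_t$, whereas \banditcombine leaves its state unchanged during explore steps, so its end-of-step state is $\beta_t = \phi_{t^{*}(t)}$, where $t^{*}(t)$ denotes the most recent exploit time $\le t$ (with the convention $\phi_0 = x_0$).

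First I would bound the per-step cost of \banditcombine. On an exploit step $t$ it equals $d(\beta_{t-1},\phi_t) + c_t(\phi_t)$. On an explore step $t$ the greedy choice of $g_t$ gives $d(\beta_{t-1}, g_t) + c_t(g_t) \le d(\beta_{t-1}, \phi_t) + c_t(\phi_t)$, so the round-trip cost $2 d(\beta_{t-1}, g_t) + c_t(g_t)$ is at most $2\bigl(d(\beta_{t-1}, \phi_t) + c_t(\phi_t)\bigr)$. Writing $\lambda_t = 2$ if $t$ is an exploration step and $\lambda_t = 1$ otherwise, both cases are covered by $\lambda_t\bigl(d(\beta_{t-1},\phi_t) + c_t(\phi_t)\bigr)$. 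Applying the triangle inequality $d(\beta_{t-1}, \phi_t) \le \sum_{j=t^{*}(t-1)+1}^{t} d(\phi_{j-1}, \phi_j)$ and swapping the order of summation yields
\[
\cost(\banditcombine) \;\le\; \sum_{t} \lambda_t\, c_t(\phi_t) \;+\; \sum_{j} S_j\, d(\phi_{j-1},\phi_j),
\]
where $S_j := \sum_{t \ge j\,:\, j,\dots,t-1 \text{ all explore}} \lambda_t$.

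Next I would take expectations. Because $\bar A$ uses no lookahead, $\phi_t$ depends only on $X_{t-1}$ and $I_{t-1}$; meanwhile $\lambda_t$ depends only on whether $t \in X$, and $S_j$ depends only on which indices $s \ge j$ lie in $X$. By the independence of the Bernoulli explore bits across time, $\lambda_t$ is independent of $\phi_t$ and $S_j$ is independent of $(\phi_{j-1},\phi_j)$, so the expectations factor. A direct computation gives $\E[\lambda_t] = 1+\gamma$. For $S_j$, conditioning on $j$ being an explore step, $S_j = 2(\tau_j - j)+1$ where $\tau_j$ is the next exploit time after $j$ and $\tau_j - j - 1$ follows a geometric distribution with mean $\gamma/(1-\gamma)$; this yields $\E[S_j] = 1 + \tfrac{2\gamma}{1-\gamma}$. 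For $\gamma < 1/4$ we have $\tfrac{2\gamma}{1-\gamma} \le \tfrac{8\gamma}{3} < 6\gamma$, so both coefficients are at most $1+6\gamma$, and since $\cost(\banditcombinep) = \sum_t\bigl(d(\phi_{t-1},\phi_t) + c_t(\phi_t)\bigr)$, we conclude $\E[\cost(\banditcombine)] \le (1+6\gamma)\,\E[\cost(\banditcombinep)]$.

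The main obstacle is making the independence argument water-tight: one must verify that $\bar A$'s decision $a_t$ really depends on no information later than time $t-1$, so that the randomness controlling $\phi_t$ is disentangled from the bits governing $\lambda_t$ and $S_j$. Once this is granted, the coupling and the two elementary Bernoulli and geometric calculations conclude the proof.
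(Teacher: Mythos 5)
Your proposal is correct and shares the paper's ingredients: the coupling so that $\bar A$'s choices agree in both runs, the greedy choice of $g_t$, and the triangle inequality. The bookkeeping differs. The paper defines per-step costs $C_t, C'_t$ mixing movement and service, charges the $i$th consecutive exploration step to $4\sum_{j=t-i+1}^t C'_j$, and sums a geometric series over $i$; you separate service from movement, obtain coefficients $\lambda_t$ on $c_t(\phi_t)$ and $S_j$ on $d(\phi_{j-1},\phi_j)$, and compute $\E[\lambda_t]=1+\gamma$ and $\E[S_j]\le\frac{1+\gamma}{1-\gamma}$ directly. A genuine advantage of your route is that the independence argument is clean: $\phi_{j-1},\phi_j$ depend only on $X_{j-1},I_{j-1}$ (since $\bar A$ has no lookahead), while $S_j$ depends only on $X\cap\{j,j+1,\dots\}$, so the expectations factor exactly; the paper's version takes an expectation over the run length $i$ while treating the $C'_j$ inside the window as if deterministic, glossing over the fact that those $C'_j$ are correlated with the event ``$t$ is the $i$th consecutive exploration step.'' Your decomposition disentangles this, and also yields a slightly tighter constant ($\frac{1+\gamma}{1-\gamma}\le 1+6\gamma$ for $\gamma<1/4$). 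One small point worth stating explicitly: the coupling must also fix any internal randomness of $\bar A$ so that both runs produce the same sequence $a_t$.
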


\begin{proof}
Note that states of \banditcombine and \banditcombinep are the same
during exploitation steps, i.e., $b_t = b'_t$ for all exploitation steps.
At time step $t$, the cost paid by \banditcombinep is
$C'_t = d(b'_{t-1}, b_t') + c_t(b_t')$
and its total cost is $\sum_{t=1}^T C'_t$.

To bound the cost paid by \banditcombine we define
$C_t := d(b'_{t-1}, b_t) + c_t(b_t)$ if $t$ is an exploitation
step, note that $b_t=b'_t$ in such case.
For exploration steps, we define
\[ C_t := d(b'_{t-1},g_t) + c_t(g_t) + d(g_t, b_t) + d(b_t,b'_t). \]
The last term is to simplify the analysis:
at step $t+1$, \banditcombine moves by a distance
$d(b_t, b_{t+1}) \leq d(b_t, b'_t) + d(b'_t, b_{t+1})$ and we split
this cost counting the first part to $C_t$ and the second one
to $C_{t+1}$. The total cost of \banditcombine
is then at most $\sum_{t=1}^T C_t$ and we have $C_t = C'_t$ for every
exploitation step $t$.

Consider an exploration step $t$ which is the $i$th consecutive exploration step,
i.e. step $a=t-i$ is an exploitation step (or $a=0$) and all steps from
$a+1$ until $t$ are exploration.
Observe that \banditcombine does not change $b_t$ during exploration steps
and we have $b_t = b_a = b'_a$.
We can bound $C_t$ as follows. We have
\begin{align*}
d(b_t, b'_t) = d(b_a,b'_t) &\leq C'_{t-i+1} + \dotsb + C'_t\\
d(b'_{t-1}, g_t) &\leq d(b'_{t-1}, b_t) + d(b_t, g_t)\\
d(b_t,g_t) + c_t(g_t) &\leq d(b_t,b'_t) + c_t(b'_t)
\end{align*}
The first inequality holds because \banditcombinep needs to move from $b_t=b_{t-i}$
to $b'_t$. The second one follows from the triangle inequality and the last one
comes from the definition of $g_t$.
In total, we have
\[ C_t \leq \big[d(b'_{t-1}, b_t)\big] + \big[d(b_t, g_t) + c_t(g_t) + d(g_t, b_t)\big]
	+ \big[ d(b_t,b'_t)\big].
\]
The first and the third brackets are bounded by $\sum_{j=a+1}^t C'_j$,
because $b_t=b'_a$ and \banditcombinep moves from $b_t$ to $b'_{t-1}$ and
$b'_{t}$ during that time.
By the choice of $g_t$, the second bracket is at most 
$2(d(b_t,b'_t) + c_t(b'_t)) \leq 2\sum_{j=a+1}^t C'_j$.

Since the probability of $t$ being the $i$th exploration step in a row is
at most $(1-\gamma)\,\gamma^i$, the expected cost of \banditcombine
is at most
\begin{align*}
(1-\gamma) \sum_t C'_t +
	\sum_{i=1}^{T} (1-\gamma)\gamma^i \cdot \sum_t 4 \sum_{j=t-i}^t C'_j\\
\leq (1-\gamma) \sum_t C'_t + \sum_{i=1}^{T} \gamma^i 4i \sum_t C'_t 
\end{align*}
which is at most $(1+6\gamma)\sum_t C'_t$ for $\gamma\leq 1/4$.
\end{proof}

\begin{proof}[Proof of Theorem~\ref{thm:bandit}]
We choose $\gamma=\frac{\min\{1,\epsilon\}}{6}$. By Lemma~\ref{lem:bandit_prime}, \banditcombinep is $(1+\epsilon)$-competitive against $\DYN_{\rho'}$ for $\rho'= \frac{3D\ell}{\gamma} \rho = \frac{3D\ell}{\gamma} r(\epsilon)$. Hence, by Lemma~\ref{lem:aug_limit} it is $(1+\epsilon)^2$-competitive against $\DYN^{\le m}$ for any
$m \leq \frac{\epsilon \gamma}{3\ell r(\epsilon)} \frac{\DYN}{D}$.
By Proposition~\ref{prop:share}, the latter quantity is
$\Omega\left(\frac{\epsilon^{3}}{\ell\ln \ell\ln(2+\frac1\epsilon)}\cdot\frac{\DYN}{D}\right)$. Lemma~\ref{lem:bandit} implies the theorem.
\end{proof}

\section{Implications for covering LPs for \texorpdfstring{$k$}{k}-server}\label{sec:covering}

An \emph{online covering problem} is specified by a linear program of the following form, where the vector $c$ is given upfront, matrix $A_t$ and vector $b_t$ are revealed at time $t$, and the entries of $c$, $A_t$ and $b_t$ are all non-negative:
\begin{align}\label{eq:covering}
    \begin{array}{l@{\quad}rcll}
    \min          & c^\intercal x  \\
    \mathrm{s.t.} & A_t x &\ge& b_t &\forall t=1,\dots,T \\
                  & x   &\in& [0,1]^n&
    \end{array}
\end{align}
We call a variable $x_i$ \emph{active} at time $t$ if the corresponding column of $A_t$ contains a non-zero entry. We denote by $\act(t)$ the indices of active variables (i.e., the indices of columns of $A_t$ that contain a non-zero entry).

An online algorithm for an online covering problem maintains a solution $x$ to the LP that is feasible for the constraints revealed so far and is typically required to be \emph{monotone} in the sense that the value of each variable $x_i$ is non-decreasing over time.

Many online problems can be expressed in this form. Since such formulations lend themselves to the design of randomized algorithms~\citep{BuchbinderN09}, there is significant interest in finding online covering formulations for the $k$-server problem~\citep{BansalBN07,BansalBN10,GuptaKP21}.

An \emph{online covering formulation of the $k$-server problem} is an LP of the form~\eqref{eq:covering} that can be constructed alongside the request arrivals (i.e., $A_t$ and $b_t$ can be specified once the $t$th request is revealed) such that
\begin{itemize}
    \item any monotone algorithm for the online covering problem can be converted \emph{online} into a randomized 
    algorithm for the $k$-server instance whose expected cost is at most a constant factor greater,
    \item any (deterministic) lazy algorithm for the $k$-server instance can be converted \emph{online} into a monotone algorithm for the online covering problem of no greater cost.
\end{itemize}
For a lazy $k$-server algorithm $A$, we denote by $x(t,A)$ the LP solution at time $t$ resulting from the latter conversion.

\begin{definition}
    We call an online covering formulation of a $k$-server instance \emph{configuration-encoding} if, for any two lazy $k$-server algorithms $A$ and $A'$ that are in the same configuration at some time $t$, it holds that $x_i(t,A)=x_i(t,A')$ for all $i\in\act(t)$.
\end{definition}
In other words, an online covering formulation is configuration-encoding if the current server configuration is uniquely determined by the current values of the active variables of the corresponding LP solution.


For example, the weighted paging problem (equivalent to $k$-server on a weighted star metric) is captured by the following configuration-encoding online covering formulation~\citep{BansalBN07}, where $w_p$ is the weight of page $p$, variable $x_{p,r}$ represents the probability that page $p$ is evicted from the cache between the $r$th and $(r+1)$st request to page $p$, $B(t)$ is the set of pages requested up to time $t$, $p_t$ is the page requested at time $t$, $r(p,t)$ is the number of requests to page $p$ up to time $t$, and $k$ is the cache size:
\begin{align*}
    &\min          & \sum_{p}\sum_{r} w_p x_{p,r}&  \\
    &\,\,\mathrm{s.t.} & \sum_{p\in B(t)\setminus\{p_t\}} x_{p,r(p,t)} &\ge |B(t)|-k &&\forall t=1,\dots,T \\
                  && x_{p,r}                                       &\in [0,1]    &&\forall p, r
\end{align*}

Here, each matrix $A_t$ contains only a single row, and the active variables at time $t$ are $x_{p,r(p,t)}$ for $p\in B(t)\setminus\{p_t\}$. The formulation is indeed configuration-encoding, since for the natural conversion of a paging algorithm $A$ into an algorithm $x(\cdot,A)$ for the covering problem, the cache of $A$ at time $t$ contains precisely $p_t$ and the pages $p\in B(t)\setminus\{p_t\}$ with $x_{p,r(p,t)}(t,A)=0$.

The conversion of a monotone algorithm for the LP (which corresponds to a ``fractional'' algorithm) into a randomized algorithm is non-trivial, but known to exist for weighted paging \citep{BansalBN07}. Similarly, for the more general case of $k$-server on HST metrics it is also known how to convert a fractional algorithm into a randomized integral one~\citep{BansalBMN15}. However, we will show now that unlike for weighted paging, no configuration-encoding online covering formulation exists for this more general setting of HSTs:

\begin{proof}[Proof of Theorem~\ref{thm:covering}]
    \citet[Theorem 2.1]{Anand0KP22} considered the version of~\eqref{eq:covering} where each matrix $A_t$ contains a single row and $b_t$ is a number, and at time $t$ the algorithm receives $\ell$ predictions $x(t,1),\dots,x(t,\ell)$ such that each $x(t,s)\in[0,1]^n$ is a feasible solution for the constraints up to time $t$. For this setting, they showed that there is an $O(\log \ell)$-competitive algorithm against the benchmark\footnote{The original version of~\citep{Anand0KP22} states the order of quantifiers as ``$\forall i\,\forall t\,\exists s$'', but this was a typographical error~\citep{Panigrahi23}.}
    \begin{align*}
        \DYNA =\min \{c^\intercal x \mid \forall t\in[T]\,\,\exists s_t\in[\ell] \,\,\forall i\in[n]\colon x_i \ge x_i(t,s_t)\}.
    \end{align*}
    
    The result directly extends to the case where $A_t$ can have several rows and $b_t$ is a vector, as one can simulate their algorithm by revealing the constraints in $A_t x \ge b_t$ row by row. Moreover, inspection of their proof shows that the result holds even against the slightly stronger benchmark that requires $x_i \ge x_i(t,s)$ only for active variables at time $t$:
    \begin{align*}
        \DYNAp =\min \{c^\intercal x \mid \forall t\in[T]\,\,\exists s_t\in[\ell] \,\,\forall i\in\act(t)\colon x_i \ge x_i(t,s_t)\}.
    \end{align*}

    Consider now an instance of the $k$-server problem with full access to $\ell$ predictors for which the lower bound of Theorem~\ref{thm:l^2LB} holds. By the construction in~\citep{Bubeck22kServer} on which the proof of Theorem~\ref{thm:l^2LB} is based, this instance uses a metric space of $n$ points such that $\ell=O(\log n / \log \log n)$. Thus, by~\citep{FakcharoenpholRT04}, the metric space can be probabilistically approximated by an HST with distortion $O(\log n)=O(\ell\log\ell)$. Combined with Theorem~\ref{thm:l^2LB}, this means that even on HSTs there exist instances of $k$-server where no randomized learning-augmented algorithm with full access to $\ell$ predictors has competitive ratio better than $\Omega(\ell^2)/O(\ell\log\ell)=\Omega(\ell/\log\ell)$ against $\DYN$. 

    Now, suppose there exists a configuration-encoding online covering formulation for the $k$-server problem on HSTs. Let $P_1,\dots,P_\ell$ be predictors for the $k$-server instance, and assume without loss of generality that each $P_s$ is lazy (the lower bound still holds for lazy predictors, since it clearly cannot help the algorithm if each predictor defers its movements for as long as possible). We can construct online the corresponding predictions $x(t,s)=x(t, P_s)$ for the online covering problem. We claim that $\DYNAp\le \DYN$. This will suffice to prove the theorem, because then the $O(\log\ell)$-competitive algorithm of~\citet{Anand0KP22} for the online covering problem against $\DYNAp$ can be converted to an $O(\log \ell)$-competitive randomized algorithm for the $k$-server problem against $\DYN$, contradicting the aforementioned lower bound of $\Omega(\ell/\log\ell)$.

    Abusing notation slightly, we write $\DYN$ also for the $k$-server algorithm achieving cost $\DYN$. Since each predictor $P_s$ is lazy, and recalling that the lower bound instance uses a metric space of $k+1$ points, we may also assume that $\DYN$ is lazy.
    Thus, $x(\,\cdot\,,\DYN)$ is a well-defined monotone (offline) algorithm for the covering problem. For $t\in[T]$ and $i\in\act(t)$, we have
    \begin{align*}
        x_i(T,\DYN) \ge x_i(t,\DYN) = x_i(t,s_t),
    \end{align*}
    where $s_t$ is the index of the predictor that $\DYN$ is following at time $t$, and the equation follows since the covering formulation is configuration-encoding. Thus, $x(T,\DYN)$ is a feasible vector for the problem $\DYNAp$ and hence $\DYNAp \le c^\intercal x(T,\DYN) \le \DYN$.
\end{proof}

\bibliography{draft}
\bibliographystyle{plainnat}

\newpage
\appendix
\onecolumn

\section{Algorithms for unfair MTS}
\label{app:algs}

\subsection{Odd Exponent}
\label{app:oddexp}
There is a randomized algorithm \OddExponent based on work functions
proposed by \cite{BBBT97}.

Choose an odd number $a$ close to $\ln \ell$ as a parameter.
State $j$ is chosen with probability
\[ p_j :=  \frac1\ell
+ \frac1\ell \sum_{i=1}^\ell (w_t(i)- w_t(j))^a, \]
where $w_t(i)$ is the work function of state $i$ at time $t$:
\begin{align*}
w_t(i) = \min\{ &d(i, x_t) +\\ 
& \sum_{j=1}^t (c_j(x_j) + r\cdot d(x_{j-1},x_j))
\,|\; x_j \in M \}.
\end{align*}
In order to be well defined (e.g., for all probabilities to be non-negative),
the input sequence $c_1, \dotsc, c_T$ needs to satisfy the following properties:
\begin{itemize}
\item Each $c_t$ has only one non-zero coordinate
\item If $c_t$ with non-zero value $c_t(i)$ would make \OddExponent
	remove all probability mass from state $i$,
	we assume that $c_t(i)$ is the smallest such value.
\item If \OddExponent already has 0 probability mass at state $i$,
	no cost function with $c_t(i)>0$ arrive.
\end{itemize}
These properties can be assumed without loss of generality, since
they can be achieved by splitting each cost function into several smaller ones
and omitting those which do not imply any cost on the algorithm
(this omission does not increase the cost of the offline optimum either).
We refer to Section 4.4.1 in \cite{BlumB00} for more details on
how to implement this algorithm in the general MTS setting.

\subsection{\share}
\label{app:share}
\share is an algorithm for tracking the best expert regime in Online Learning
proposed by \citet{herbster1998tracking}.
It requires two parameters:
the sharing parameter $\alpha\in[0,1/2]$ and $\beta\in [0,1]$ (logarithm
of the learning rate).

We can apply it to unfair MTS in uniform metric space of size $\ell$ with cost
functions bounded by $1$ as follows. It starts with weights $w_0(1) = \dotsb
w_0(\ell)=1$ and uniform probability distribution over the states, i.e.,
$p_0(i) = w_0(i) / \sum_{j=1}^\ell w_0(j)$.
At time $t$, when its probability distribution over states is $p_t$,
it incurs cost $\langle p_t, c_t\rangle$ and updates the weights
and its probability distribution for the next time step:
\begin{align*}
w_{t+1}(i) &:= w_t(i) \cdot \beta^{c_t(i)} + \alpha \Delta/\ell\\
p_{t+1}(i) &:= w_{t+1}(i)/\sum_{j=1}^\ell w_{t+1}(j),
\end{align*}
where $\Delta = \sum_{i=1}^\ell (w_t(i) - w_t(i)\beta^{c_t(i)})$.
This way, its distribution $p_t$ depends only on $c_1, \dotsc, c_{t-1}$
and proposes only one distribution $p_t$ at each time step.
Proposition~\ref{prop:share} by \citet{BlumB00} states the performance
guarantee of this algorithm for unfair MTS.
Note that this algorithm can be easily adapted to unbounded cost functions:
we split each cost function into several smaller cost functions
bounded by $1$. Due to this splitting, however, it may move several
times during each time step.

\end{document}